
\documentclass{article}

\usepackage{microtype}
\usepackage{graphicx}
\usepackage{subfigure}
\usepackage{booktabs} 


\usepackage{bbm}
\usepackage{xcolor}
\usepackage{amsmath}
\usepackage{amsthm}

\usepackage{thmtools}
\usepackage{thm-restate}

\usepackage{hyperref}

\usepackage{amssymb}
\usepackage{mathtools}
\usepackage{thmtools}
\usepackage{cleveref}
\usepackage{comment}
\usepackage[mathscr]{eucal}


\usepackage[accepted]{icml2021}



\usepackage{xcolor}

\newtheorem{theorem}{Theorem}
\newtheorem{lemma}{Lemma}
\newtheorem{claim}{Claim}
\newtheorem{definition}{Definition}

\newtheorem{condition}{Condition}

\DeclareMathOperator{\A}{{A}}

\DeclareMathOperator{\F}{{F}}
\DeclareMathOperator{\B}{{B}}
\DeclareMathOperator{\Sc}{\textit{{\text{F}}}}
\DeclareMathOperator{\I}{\mathcal{I}}

\DeclareMathOperator{\pit}{\tilde{\pi}}
\DeclareMathOperator{\Reg}{\text{Reg}}
\DeclareMathOperator{\Rew}{\text{Rew}}
\DeclareMathOperator{\mub}{\bar{\mu}}
\DeclareMathOperator{\muh}{\hat{\mu}}
\DeclareMathOperator{\OPT}{\text{OPT}}
\newcommand\CBBSD[1][]{CBBSD}

\newcommand\event[1]{\mathop{\mathbb{I}\left(#1\right)}}

\newcommand\Ex[2]{\mathop{\underset{#1}{\mathbb{E}}\left[#2\right]}}
\newcommand\Pro[1]{\mathop{\mathbb{P}\left(#1\right)}}

\begin{document}

\twocolumn[
\icmltitle{Combinatorial Blocking Bandits with Stochastic Delays}



\icmlsetsymbol{equal}{*}

\begin{icmlauthorlist}
\icmlauthor{Alexia Atsidakou}{equal,ece}
\icmlauthor{Orestis Papadigenopoulos}{equal,cs}
\icmlauthor{Soumya Basu}{goo}
\icmlauthor{Constantine Caramanis}{ece}
\icmlauthor{Sanjay Shakkottai}{ece}
\end{icmlauthorlist}

\icmlaffiliation{cs}{Department of Computer Science, The University of Texas at Austin, USA}
\icmlaffiliation{ece}{Department of Electrical and Computer Engineering, The University of Texas at Austin, USA}
\icmlaffiliation{goo}{Google, Mountain View, USA}

\icmlcorrespondingauthor{Alexia Atsidakou}{atsidakou@utexas.edu}
\icmlcorrespondingauthor{Orestis Papadigenopoulos}{papadig@cs.utexas.edu}

\icmlkeywords{blocking, bandits, combinatorial, upper, confidence, bound, stochastic, delays}

\vskip 0.3in
]



\printAffiliationsAndNotice{\icmlEqualContribution} 

\begin{abstract}
Recent work has considered natural variations of the {\em multi-armed bandit} problem, where the reward distribution of each arm is a special function of the time passed since its last pulling. In this direction, a simple (yet widely applicable) model is that of {\em blocking bandits}, where an arm becomes unavailable for a deterministic number of rounds after each play. In this work, we extend the above model in two directions: (i) We consider the general combinatorial setting where more than one arms can be played at each round, subject to feasibility constraints. (ii) We allow the blocking time of each arm to be stochastic. We first study the computational/unconditional hardness of the above setting and identify the necessary conditions for the problem to become tractable (even in an approximate sense). Based on these conditions, we provide a tight analysis of the approximation guarantee of a natural greedy heuristic that always plays the maximum expected reward feasible subset among the available (non-blocked) arms. When the arms' expected rewards are unknown, we adapt the above heuristic into a bandit algorithm, based on UCB, for which we provide sublinear (approximate) regret guarantees, matching the theoretical lower bounds in the limiting case of absence of delays.
\end{abstract}

\section{Introduction}
It is only recently that researchers have focused their attention on variants of the stochastic {\em multi-armed bandit} (MAB) problem where the mean reward of each arm is a specific function of the time passed since its last pulling~\cite{KI18, PBG19, CCB19}.
These variants capture applications where the mean outcome of an action temporarily decreases or fluctuates after each use. A simple yet very expressive model in this direction is that of {\em blocking bandits}~\cite{BSSS19}, where each arm becomes blocked (i.e., it cannot be played again) for a deterministic number of subsequent rounds after each play, known as the {\em delay}.

In this paper, we generalize blocking bandits to model two important and well-motivated  properties: (i) {\em Stochastic} blocking, where the delay of each arm is 
randomly sampled from a distribution after each pull, and (ii) {\em Combinatorial} actions, where more than one arm can be pulled at each round, subject to general feasibility constraints.

Our model can accommodate a variety of settings where arm pulling is subject to combinatorial constraints, while the repeated selection of a certain arm is undesirable or even infeasible. For example, in ride sharing platforms, where the users are matched with rides, the allocation of resources imposes {\em matching} constraints, while the transit times of rides give rise to {\em stochastic blocking} of resources \cite{DSSX18}. Matching constraints are also natural in task assignment in the cloud, and combinatorial network optimization where the processing/serving times of tasks can be modeled as stochastic blocking. Further, {\em knapsack} constraints emerge inherently in ad placement (multiple ad slots in a webpage) \cite{CWYW16}, and movie/song recommendation (multiple recommendation slots) \cite{KWAS15b}; while repetitive ads and recommendations should be avoided for user satisfaction by introducing (possibly) random delays.  Whereas these varied applications naturally are within the scope of our results, to the best of our knowledge, they are beyond the reach of prior works, and in particular~\cite{BSSS19,BPCS20,PC21,BCMT20}.

\subsection{Central Challenges and Main Contributions}
\noindent{\bf Model.} A {\em player} is given a set of {\em arms}, each corresponding to an unknown {\em reward-delay} distribution. At each round, the player plays a subset of the available arms, subject to feasibility constraints, and collects, for each arm played, the realized reward of the round. Additionally, each of the selected arms becomes unavailable for a random duration equal to the realized delay of the round. The player has access to an oracle that takes as input a subset of arms and a weight vector, and returns an approximately/probably maximum weight feasible subset. The high-level goal is to maximize cumulative reward that is acquired over time.

There are three central challenges towards the player's goal described above. We outline these challenges, and use these as an organizing principle to describe our main contributions.

\noindent{\bf Challenge: Inapproximability.} In the {\em full-information setting}, where we assume prior knowledge of the arm mean rewards, the blocking bandits problem is already NP-hard, even in the simple case where the player can pull at most one arm per round and the arm delays are deterministic \cite{SST09}. In the generic combinatorial bandit setting, the resulting scheduling problem may be {\em NP-hard to even approximate}, even if the underlying feasible set of arms is ``easy'' to optimize. Thus, a fundamental challenge is to identify sufficient conditions on the combinatorial structure of the feasibility constraints under which our problem accepts efficient $\mathcal{O}(1)$-approximation algorithms. 

\noindent{\bf Contribution 1}: We show that if the family of feasible sets satisfies the {\em hereditary property} (subsets of feasible sets are feasible), then our problem accepts $\mathcal{O}(1)$-approximations. 
This is a natural property that appears in many practical applications (e.g., knapsack and matching constraints). Specifically, using a gap-preserving reduction from the {\em Edge-Disjoint Paths} (EDP) problem, we prove that, without the hereditary property, there exists no polynomial-time ${\Omega}(k^{-\frac{1}{2} + \epsilon})$-approximation algorithm for any $\epsilon>0$, where $k$ is the number of arms, unless P = NP. Interestingly, this hardness result holds even when the underlying combinatorial set accepts a polynomial-time algorithm for linear maximization. Finally, we show that even assuming feasible sets that satisfy the hereditary property and admit efficient linear program solution, no efficient algorithm can achieve an approximation ratio greater than $1-\frac{1}{e}+\epsilon$, for any $\epsilon>0$, unless P = NP. We prove this result using a reduction from the {\em Max-k-Cover} problem. Hence, constant approximations are the best one can hope for.

\noindent{\bf Challenge: Analyzing the Greedy Algorithm.} A natural heuristic for the full-information setting can be obtained by greedily playing the maximum expected reward feasible subset among the available arms of each round. Though this greedy step may itself be hard, we follow the paradigm of the combinatorial bandits literature (see \cite{WC17} and references therein) and assume access to a black-box (approximate/randomized) oracle for this (static) greedy problem. Though natural, the performance of such a greedy heuristic in our setting remains unknown. In part, the technical challenge stems from the black-box oracle assumption. Because we have only oracle-access to the greedy algorithm, there is no easy way to characterize the solution returned by the oracle at each round. Further, as opposed to the case of deterministic delays considered in prior work, in our case, any optimal algorithm in hindsight is inherently online and adaptive to random delay realizations. 

\noindent{\bf Contribution 2}: We provide a {\em tight approximation analysis} of the greedy heuristic when the feasible sets of arms satisfy the hereditary property. Specifically, assuming access to an oracle such that, given a weight vector and a set of arms, it returns an $\alpha$-approximation of the maximum weight feasible subset with probability $\beta$, we prove that the greedy heuristic yields an $\frac{\alpha\beta}{1+\alpha \beta}$-approximation (asymptotically) for the full-information setting. The key in proving the above bound is to consider the {\em expected pulling rate} of each arm in an optimal solution. 
Then, by the hereditary property and the fact that the optimal solution is not a priori aware of the delay realizations, we show properties of these rates and, thus, provide the above guarantee.

\noindent{\bf Challenge: Bandit algorithm.} In the case where distributional knowledge of rewards and delays is not assumed, our problem seems significantly harder than standard combinatorial bandits. Even in the absence of blocking, the analysis of bandit algorithms based on the technique of {\em Upper Confidence Bound} (UCB) \cite{ACBF02} becomes highly non-trivial in the combinatorial setting and it required a considerably long line of research to achieve optimal regret guarantees~\cite{WC17,GKJ2012,CWY13,CTPL15,KWAS15a,CWYW16}. These guarantees are obtained by measuring the loss compared to the optimal feasible subset, which is {\em fixed} throughout the time horizon. In the presence of stochastic blocking, these results do not seem to apply, as the optimal solution in hindsight changes dynamically over time according to the set of available arms.

\noindent{\bf Contribution 3}: We develop a UCB-based variant of the greedy heuristic for the bandit case of our problem, where the reward and delay distributions are initially unknown. As already mentioned, the optimal arm-pulling schedule in our case is not fixed, but depends on the randomness of the delays and the player's actions. Our key insight is to control regret by comparing to a static fractional (hence fictitious) solution, rather than to the dynamics of the optimal solution. By leveraging and extending the framework of \cite{WC17} in a way to capture dynamically changing sets of available arms, we are able to provide logarithmic $\frac{\alpha\beta}{1+\alpha\beta}$-approximate regret guarantees. As we note, these guarantees are optimal in the sense that they match the theoretically proven lower bound presented in \cite{KWAS15a} in the limiting case of absence of delays.

\subsection{Related Work}

Following its introduction \cite{T33, LR85}, several variants of the stochastic MAB framework have been thoroughly studied (see \cite{LS20,Sliv19} for an overview). We focus on the case of {\em stochastic combinatorial bandits} that is mostly related to our setting. In this direction, a long line of work~\cite{CWY13,CTPL15,KWAS15a,KWAS15b,CHLLLL16,CWYW16,WC17} focuses on variations of the MAB problem, where more than one arm can be played at each round. The above model has been studied in a high level of generality, including arbitrary feasible sets of arms where linear optimization is achieved via (approximate/randomized) oracles, non-linear reward functions of bounded smoothness, probabilistically triggered arms and more. Starting with the work of Gai et al. \yrcite{GKJ2012}, the case of arbitrary feasible sets and linear reward functions has been particularly studied. Given $k$ arms and a time horizon $T$, the state-of-the-art \cite{WC17,KWAS15a} in that case is a UCB-based bandit policy of $\mathcal{O}(\frac{k\cdot r}{\Delta} \log(T))$ regret against the best possible solution that is achievable in polynomial time, where $r$ is the maximum cardinality of a feasible set and $\Delta$ is the gap in expected reward between the optimal and the best suboptimal feasible set. As proved in \cite{KWAS15a}, the above regret bound matches the theoretical lower bound for this setting.

Our model falls into the area of {\em stochastic non-stationary bandits}. Important lines in this area include {\em restless bandits}, where the arms' mean rewards change at every round \cite{Whittle88, GMS10}, and {\em rested bandits}, where the means can change only when the arm is played \cite{Gittins79,TL12}. The above classes appear to contain notoriously hard problems from a computational viewpoint. In fact, even approximating the optimal solution for the class of restless bandits is PSPACE-hard \cite{PT1999}. Moreover, our model also belongs to the class of {\em Markov Decision Processes} (MDPs) \cite{P94}. However, modeling our problem as an MDP would be inefficient, as it would require a state space that grows exponentially in the number of arms.

Recently, there has been much interest in several variants of the non-stationary model where reward distributions exhibit special temporal correlations with the player's past actions \cite{KI18,PBG19,CCB19,LKLM20}. In \cite{BSSS19}, the authors first study the blocking bandits problem, where each arm becomes blocked for a deterministic number of time steps after it has been played. The above problem has also been studied in an adversarial setting \cite{BCMT20} and in a contextual setting \cite{BPCS20}, where the mean rewards of the arms depend on a stochastic context that is observed by the player at the beginning of each round. Very recently, \cite{PC21} generalize the problem to the setting where more than one arms can be played at each round, subject to matroid constraints. However, none of these studies handle either stochastic delays, or a general class of independence systems, both of which are important in practice (e.g., knapsack problems cannot be handled by any prior studies). Furthermore, in terms of techniques, the analysis of \cite{PC21} (the only one above addressing combinatorial constraints) heavily relies on the {\em submodularity} of the {\em matroid rank function} and the fact that the delays are known and deterministic. We require neither of these to hold in our setting.

\section{Preliminaries}
We consider a set of $k$ arms, denoted by $\A$, and an {\em unknown} time horizon $T \in \mathbb{N}$. Each arm $i \in \A$ is associated with a reward distribution $\mathcal{X}_i$ that is bounded w.l.o.g. in $[0,1]$. We denote by $\mu_i$ the mean reward of arm $i$. In the blocking setting, whenever an arm $i$ is pulled at some round $t$, it cannot be played again for $D_{i,t}-1$ subsequent rounds, namely, within the interval $\{t,...,t+D_{i,t}-1\}$ (i.e., $D_{i,t} = 1$ implies that the arm is not blocked). 
For each arm $i \in \A$, the value $D_{i,t} \in \mathbb{N}_{\geq 1}$, which we refer to as the {\em delay}, is a random variable drawn from some arm-dependent distribution $\mathcal{D}_i$ of mean $d_i = \mathbb{E}[D_{i,t}], \forall t \in [T]$ and bounded support in $[1, d^{\max}_i]$. At each round $t$ and for each arm $i \in A$, the reward and delay realization, $X_{i,t}$ and $D_{i,t}$, respectively, are drawn independently from the joint distribution $\mathcal{X}_i, \mathcal{D}_i$ and, thus, they are allowed to be correlated. We denote by $d_{\max} = \max_{i \in \A}\{d^{\max}_i\}$ the maximum delay in a given instance. 

We consider the setting of combinatorial bandits, where more than one arm can be played at each round, subject to feasibility constraints. Let $\I\subseteq \{0,1\}^k$ be the family of {\em feasible} subsets of $\A$. For any subset of arms $S \subseteq \A$, we denote by $\I(S) = \{S' \in \I~|~S' \subseteq S\}$ the subset of feasible sets that only contain arms from $S \subseteq \A$. Let $r$ be the maximum cardinality of a set in $\I$, namely, $r = \max_{S \in \I}\{|S|\}$. Since the problem of maximizing a linear function over a feasible family $\I$ can be NP-hard, following the paradigm of combinatorial bandits as in \cite{WC17, CWYW16}, access to the feasible set is given through an oracle. Given a non-negative weight vector $\mu \in \mathbb{R}_{\geq 0}^k$ and a set $S \subseteq \A$, let $\OPT_{\mu}(S)$ be the maximum weight feasible set in $\I(S)$ w.r.t. $\mu$. For any $S\subseteq \A$ and vector $\mu \in \mathbb{R}^k_{\geq 0}$, we use the notation $\mu(S)=\sum_{i\in S}\mu_i$. For $\alpha, \beta \in (0,1]$, we assume access to a polynomial-time $(\alpha, \beta)$-approximation oracle, $f_{\mu}^{\alpha, \beta}$, for the underlying combinatorial problem, defined as follows:

\begin{definition}[($\alpha,\beta$)-approximation oracle] Given a weight vector $\mu \in \mathbb{R}^k$ and a subset $S \subseteq \A$, an $(\alpha, \beta)$-approximation oracle for $\alpha, \beta \in (0,1]$ outputs a set $f_{\mu}^{\alpha,\beta}(S) \in \I(S)$, such that 
$$\Pro{ \mu(f_{\mu}^{\alpha,\beta}(S)) \geq \alpha \cdot \mu(\OPT_{\mu}(S))} \geq \beta.$$
\end{definition}

We are now ready to describe the setting: At each round $t$, after observing the set of available arms (that is, the arms that are not blocked by some previous pulling), the {\em player} plays any feasible subset $\A_t$ of these arms, namely, $\A_t \in \I$ and collects the realized rewards. At that point, each arm $i \in \A_t$ becomes blocked with delay $D_{i,t}$. We emphasize that the player is {\em initially unaware of the delay distributions} and can only infer each delay realization $D_{i,t}$ by observing the time where arm $i$ becomes available again. The player seeks to maximize her {\em total expected reward} in $T$ rounds, formally defined as:
$$
    \Rew^{\pi}(T) = \Ex{}{\sum_{t\in [T]} \sum_{i\in \A} X_{i,t} \event{i\in A_t^{\pi}}},
$$
where $A_t^{\pi}$ denotes the subset of arms played by an algorithm $\pi$ at round $t$. We refer to the above setting as {\em Combinatorial Blocking Bandits with Stochastic Delays} (\CBBSD).

In the bandit setting, we compare the performance of our algorithm with the expected reward of an optimal algorithm that has distributional knowledge of the arm rewards and delays, is aware of the time horizon $T$ and has infinite computational power. Let $\Rew^{*}(T)$ be the optimal expected reward. To evaluate our algorithm we compute its approximate regret compared to the optimal. The following definition of $\rho$-{\em approximate regret} (or $\rho$-regret) is standard in the field of combinatorial bandits for underlying feasibile sets where linear maximization is NP-hard:
\begin{align*}
    \Reg_{\rho}^{\pi}(T) &:= \rho \Rew^{*}(T) - \Rew^{\pi}(T).
\end{align*}

We present two important conditions and investigate their necessity for proving the approximation guarantee of our proposed algorithm. 
In the standard MAB framework, it is assumed that the optimal algorithm only knows the distributions of the rewards, but not the realizations. The following condition states that the optimal algorithm is also unaware of the delay realizations before pulling an arm:

\begin{condition}\label{assumption:delays}
The optimal online algorithm has knowledge of the delay distributions, but is not a priori aware of the delay realizations.
\end{condition}

A family $\I$ of feasible subsets is called an {\em independence system}, if it satisfies the following property:
\begin{definition}[Hereditary Property] \label{def:herediraty}
    Every subset of a feasible set in $\I$ is a also a feasible set, that is, $S' \subset S \subseteq \A$ and $S \in \I$ implies that $S' \in \I$.
\end{definition}

We introduce the following condition on the family of feasible sets of our problem:

\begin{condition}\label{assumption:independence}
We assume that the family of feasible set $\I$ in every instance of our problem is an independence system.
\end{condition}

The above two conditions are critical. Indeed, as we show in Section \ref{sec:hardness},  dropping any of these makes the problem intractable, even in an approximate sense.

\section{Computational-Unconditional Hardness} \label{sec:hardness}

In this section, we study the hardness of the \CBBSD~problem. We emphasize that all the results of this section hold even in the simple case where the arm rewards are deterministic and known to the player.

We first show that even in the simple case of a single arm that is always feasible to play (if available), we cannot compete withing a $\omega({1}/{d_{\max}})$-factor against an optimal solution that knows the delay realizations.

\begin{restatable}{theorem}{hardnessDelays}\label{thm:hardness:delays}
In the case where \Cref{assumption:delays} does not hold (that is, the optimal algorithm knows the delay realizations of all rounds), there exists no algorithm (not even one of infinite computational power) for the \CBBSD~problem that achieves an approximation ratio of $\omega(\frac{1}{d_{\max}})$.
\end{restatable}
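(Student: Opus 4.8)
The plan is to exhibit a single instance of the \CBBSD{} problem on which \emph{every} online algorithm (even a computationally unbounded one) loses an $\Omega(d_{\max})$ factor to the offline optimum that knows all delay realizations. Take one arm that is always feasible, i.e.\ $\I=\{\emptyset,\{1\}\}$ (which trivially satisfies \Cref{assumption:independence}, so the gap is genuinely caused by dropping \Cref{assumption:delays}), with deterministic unit reward $X_{1,t}=1$. The crucial design choice is a two-point delay distribution drawn i.i.d.\ across rounds: $D_{1,t}=1$ with probability $\tfrac12$ and $D_{1,t}=d_{\max}$ with probability $\tfrac12$, so that $d^{\max}_1=d_{\max}$ and $d_1=\tfrac{d_{\max}+1}{2}$. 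I will then lower bound the offline optimum by $\Omega(T)$ and upper bound the reward of any online algorithm by $O(T/d_{\max})$; their ratio is $O(1/d_{\max})$, which rules out any $\omega(1/d_{\max})$-approximation.

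For the offline bound, an algorithm that knows every realization $D_{1,t}$ can pull the arm at exactly those rounds $t$ with $D_{1,t}=1$. Such pulls block the arm for zero additional rounds, so the arm stays available at every round and the algorithm collects one unit of reward on each of the $|\{t\le T:\,D_{1,t}=1\}|$ ``free'' rounds. In expectation this is $T/2$, hence $\Rew^{*}(T)\ge T/2$.

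For the online bound, let $t_1<t_2<\dots$ be the (random) rounds at which a given online algorithm pulls the arm and let $N$ be the number of pulls within $[T]$; the online expected reward equals $\mathbb{E}[N]$. Since a delay is only revealed after the pull is committed, $D_{1,t_j}$ is independent of the algorithm's information at the time of the $j$-th pull; consequently the event $\{N\ge j\}$ is determined by $D_{1,t_1},\dots,D_{1,t_{j-1}}$ and the algorithm's internal randomness, so $N$ is a stopping time for the i.i.d.\ sequence $(D_{1,t_j})_j$. A Wald-type identity then yields $\mathbb{E}\big[\sum_{j\le N} D_{1,t_j}\big]=d_1\,\mathbb{E}[N]$. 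On the other hand, the blocking intervals $[t_j,\,t_j+D_{1,t_j}-1]$ are pairwise disjoint and contained in $[1,\,T+d_{\max}-1]$, so $\sum_{j\le N} D_{1,t_j}\le T+d_{\max}-1$ deterministically. Combining, $\mathbb{E}[N]\le (T+d_{\max}-1)/d_1 = O(T/d_{\max})$, and this bound holds for any online algorithm irrespective of its computational power.

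Putting the two estimates together, the approximation ratio attainable on this instance is at most $\Rew^{\pi}(T)/\Rew^{*}(T)\le (T+d_{\max}-1)/(d_1\cdot T/2)=O(1/d_{\max})$ for all large $T$, which proves the claim. I expect the only delicate point to be the justification of the Wald-type identity — namely that the delay of the $j$-th pull is independent of the event that a $j$-th pull occurs by round $T$ — which is exactly where the i.i.d.\ assumption and the ``delays revealed only upon pulling'' semantics are used; the rest is elementary counting. (One may alternatively phrase the online bound by first noting, via a one-step exchange argument, that pulling whenever available is an optimal online policy, and then analyzing the resulting renewal process with mean inter-renewal time $d_1$.)
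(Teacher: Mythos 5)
Your proposal is correct, and it uses the same hard instance as the paper: a single always-feasible arm with unit deterministic reward and delay equal to $1$ or $d_{\max}$ with probability $\tfrac12$ each, together with the same clairvoyant benchmark (pull exactly on the rounds with realized delay $1$, earning $T/2$ in expectation). Where you genuinely diverge is in upper-bounding the reward of an arbitrary online algorithm. The paper parameterizes the online policy as ``play with probability $q$ when available,'' models it as a $d$-state Markov chain, computes the stationary distribution, and maximizes over $q$ to get average reward $\frac{1}{1+p(d-1)}$; this requires the (cited but not fully argued) claim that the optimal online policy lies in this stationary, memoryless family. You instead observe that the blocking intervals $[t_j,\,t_j+D_{1,t_j}-1]$ of successive pulls are pairwise disjoint inside $[1,\,T+d_{\max}-1]$, and apply Wald's identity to the i.i.d.\ delays sampled at the (stopping-time) pull rounds to get $d_1\,\mathbb{E}[N]\le T+d_{\max}-1$. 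This renewal-type argument bounds \emph{every} online algorithm directly, with no need to characterize or restrict the form of the optimal online policy, and it yields the same asymptotic ratio $\frac{4}{d_{\max}+1}$ as the paper. The one delicate point --- that $D_{1,t_j}$ is independent of the event $\{N\ge j\}$ because delays are revealed only upon pulling --- is exactly where \Cref{assumption:delays} is used, and you identify and justify it correctly. The trade-off: the paper's Markov-chain computation pins down the exact optimal online policy and its value (useful for seeing that the bound is achieved), while your argument is shorter, more elementary, and more robustly covers non-stationary or randomized online policies without an appeal to MDP theory.
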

{\em Proof sketch.}
We consider the case of an infinite time horizon and a single arm ($k=1$) of deterministic reward, equal to $1$. The delay of the arm is either $d > 1$ or $1$, each with probability $\frac{1}{2}$. The proof of the theorem follows by modeling the optimal policy in the above example as a Markov Chain, and comparing its expected average reward with that of a possibly sub-optimal policy, which is a priori aware of the delay realizations and plays the arm only when the realized delay is $1$.
\qed

We now focus on the hardness of the problem, restricting our attention to the simpler case where the arm rewards and delays are deterministic and known. In the next result, we show that if the underlying feasible set $\I$ does not satisfy \Cref{assumption:independence}, then, for any $\epsilon > 0$, there cannot exist any polynomial-time algorithm of approximation guarantee better than ${\Omega}({k^{-\frac{1}{2} + \epsilon}})$, where $k$ is the number of arms, unless P = NP. Interestingly, the above result holds even if linear maximization over the family of feasible sets $\I$ can be done efficiently at every round (i.e., having access to a $(1,1)$-approximation oracle). We prove the above claim by a gap-preserving reduction from the following problem:

\begin{definition}[Edge-disjoint paths (EDP)]
Given a directed graph $\mathcal{G}=(V,E)$, where $V$ is the set of vertices and $E$ is the set of $k'$ edges, and $m$ pairs of vertices $\mathcal{T}=\{(s_i,t_i)~|~s_i,t_i\in V, i\in [m]\}$, compute the maximum number of $(s_i,t_i)$ pairs that can be connected using edge-disjoint paths.
\end{definition}

As we show, the full-information case of our bandit problem captures the hardness of EDP, if we do not assume that the underlying feasible set of arms in $\I$ is an independence system (that is, satisfying \Cref{assumption:independence}). The following result is known for the EDP problem.

\begin{theorem}[\cite{GKRSY99}] \label{thm:edp:gap_problem}
For any $\epsilon>0$, given a directed graph $\mathcal{G}=(V,E)$ with $k' =|E|$ and a set of $m$ pairs of vertices $\mathcal{T}=\{(s_i,t_i)~|~s_i,t_i\in V, i\in [m]\}$, it is NP-hard to distinguish whether all pairs in $\mathcal{T}$ or at most a fraction of $\frac{1}{k'^{{1}/{2}-\epsilon}}$ of the pairs can be connected by edge-disjoint paths.
\end{theorem}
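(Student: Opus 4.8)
The plan is to establish this result as a gap-preserving reduction \emph{from} the gap version of a two-prover one-round game (equivalently, Label Cover) \emph{to} directed EDP. The starting point is a two-prover system for which, by the PCP theorem combined with the parallel repetition theorem, it is NP-hard to distinguish games of value $1$ (all verifier constraints simultaneously satisfiable) from games of value at most $\delta$, where $\delta$ can be driven down to any desired inverse-polynomial $n^{-c}$ by $O(c\log n)$-fold parallel repetition, at the cost of enlarging the question/answer alphabets to size roughly $n^{O(c)}$. The exponent $\tfrac{1}{2}-\epsilon$ in the statement will ultimately be controlled by how we trade $\delta$ against this alphabet blow-up.

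First I would encode each verifier interaction of the game as a family of \emph{canonical paths} in a directed graph $\mathcal{G}$. For each question asked to the provers I introduce one demand pair $(s_i,t_i)$, and I arrange the edges so that every $s_i\!\to\!t_i$ route must thread through a sequence of ``answer gadgets,'' one per prover; choosing a routing path is then tantamount to committing to an answer for that question. The essential design requirement is that edge-disjointness across pairs must \emph{enforce consistency}: whenever two demand pairs whose questions share a verifier constraint are both routed, their chosen paths are forced to collide on a shared edge unless the corresponding answers are ones the verifier would accept.

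Next I would prove the two directions of the gap. For completeness, given a prover strategy of value $1$, route each pair $(s_i,t_i)$ along the canonical path dictated by the strategy's answers; since every constraint is satisfied, no two such paths share an edge, so all $m$ pairs are routed edge-disjointly. For soundness I would argue the contrapositive: any family of edge-disjoint paths routing a $\gamma$-fraction of the pairs can be decoded into a (randomized) prover strategy of success probability $\Omega(\gamma)$, by reading off each prover's answer from the gadget its routing path traverses, with edge-disjointness guaranteeing these decoded answers are consistent on the routed constraints. Hence if the game value is at most $\delta$, at most $O(\delta)\cdot m$ pairs can be routed, giving a gap of $\Omega(1/\delta)$ between the two instances.

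Finally I would calibrate the parameters. The number of demand pairs $m$ scales with the number of game questions, while the number of edges $k'=|E|$ is polynomial in $m$ and in the answer-alphabet size produced by repetition; in the balanced construction both the alphabet size and the demand count grow like $1/\delta$, so that $k'$ scales like $(1/\delta)^2$ and the completeness/soundness gap $1/\delta$ is of order $\sqrt{k'}$. I would then choose the repetition count so that $\delta \le k'^{-(1/2-\epsilon)}$, forcing the routable fraction in the no-instance down to $k'^{-(1/2-\epsilon)}$ while the yes-instance still routes all $m$ pairs. The main obstacle, and the technical heart of the argument, is exactly this balancing: the consistency gadgets must be designed so that edge-disjointness is \emph{tight}---neither over- nor under-constraining the provers' answers---and the edges must be counted sharply enough that the alphabet blow-up from parallel repetition contributes only the $\epsilon$ slack, yielding the optimal $\tfrac{1}{2}$ exponent rather than a weaker polynomial bound.
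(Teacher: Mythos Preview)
The paper does not prove this theorem at all: it is stated as a known result, attributed to \cite{GKRSY99}, and is used as a black box in the subsequent reduction establishing \Cref{thm:hardness:poly}. There is therefore no ``paper's own proof'' to compare against; the authors simply cite the literature.

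Your sketch is a reasonable high-level outline of the argument in the cited reference, which indeed proceeds via a gap-preserving reduction from a 2-prover 1-round proof system (Label Cover) amplified by Raz's parallel repetition theorem, with routing gadgets encoding prover answers and edge-disjointness enforcing consistency. That said, as a stand-alone proof your account is too schematic at the crucial points: the statement ``$k'$ scales like $(1/\delta)^2$'' is asserted rather than derived, and the decoding step in the soundness direction (extracting a strategy of value $\Omega(\gamma)$ from a $\gamma$-fraction routing) glosses over why edge-disjointness forces consistency \emph{globally} rather than just pairwise. For the purposes of this paper, though, none of that is needed---the correct move is simply to cite \cite{GKRSY99} and use the gap as stated.
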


We now prove the following theorem:

\begin{restatable}{theorem}{hardnessPoly}
\label{thm:hardness:poly}
Unless P = NP and for any $\epsilon > 0$, there exists no polynomial-time $\Omega(k^{-\frac{1}{2}+\epsilon})$-approximation algorithm for \CBBSD~problem, in the case of feasible sets that are not independence systems. The above holds even for feasible sets where linear optimization can be performed efficiently.
\end{restatable}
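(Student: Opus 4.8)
\textit{Proof proposal.} The plan is to give a gap-preserving reduction from EDP and invoke \Cref{thm:edp:gap_problem}. Starting from an EDP instance $(\mathcal G=(V,E),\mathcal T=\{(s_i,t_i)\}_{i\in[m]})$ --- which, via the standard layering transformation (and splitting terminals so that they are all distinct, so that in particular $m\le|E|$), I would take to be a layered DAG in which every $s_i$--$t_i$ path uses the same number of edges --- I would construct the following deterministic, full-information \CBBSD~instance. For every edge $e\in E$ there is an arm of mean reward $0$, and for every pair $i\in[m]$ an extra arm $a_i$ of mean reward $1$, so $k=|E|+m\le 2|E|$. All delays are deterministic and equal to a common value $D\ge m$, so that within the horizon every arm is pullable at most once. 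The feasible family is $\I=\{\emptyset\}\cup\{\{a_i\}\cup P : i\in[m],\ P\text{ the edge set of a simple }s_i\text{--}t_i\text{ path in }\mathcal G\}$, and the horizon is $T=m$.

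First I would verify the two structural claims the theorem insists on. On the one hand, $\I$ is \emph{not} an independence system: a proper subset of a routing path --- or the singleton $\{a_i\}$ --- is infeasible as soon as paths have length at least two. On the other hand, a $(1,1)$-oracle for $\I$ runs in polynomial time for \emph{every} weight vector: because $\mathcal G$ is a layered DAG, a maximum-weight $s_i$--$t_i$ path inside any given subgraph $S$ is computable by a layer-by-layer dynamic program, and $\OPT_w(S)$ is obtained by maximizing $w_{a_i}$ plus that value over $i$. Thus the constructed instance genuinely has the form the theorem asks about: feasible sets failing the hereditary property, yet linear maximization still efficient.

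The gap argument then has two halves. \emph{Completeness}: if the EDP instance is a YES instance, the $m$ pairs admit pairwise edge-disjoint paths, and playing the $i$-th such path together with $a_i$ at round $i$ is always feasible and collects reward $1$ per round, so $\Rew^{*}(T)=m$ (no round can yield more than $1$). \emph{Soundness}: since each edge-arm is pullable only once, the paths played over the $T=m$ rounds are pairwise edge-disjoint, and since each $a_i$ is pullable only once, each pair contributes at most $1$ to the total; hence \emph{any} policy's reward equals the number of distinct pairs it connects by edge-disjoint paths, which is at most the EDP optimum, i.e.\ at most $m/k'^{1/2-\epsilon}$ with $k'=|E|$ in a NO instance. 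A polynomial-time $\rho$-approximation would therefore earn $\ge\rho m$ on YES instances and $\le m/k'^{1/2-\epsilon}$ on NO instances; whenever $\rho>k'^{-1/2+\epsilon}$ these ranges are disjoint, so the algorithm would decide the NP-hard promise problem of \Cref{thm:edp:gap_problem}. Finally, since $k\le 2k'$, any fixed $\rho=\Omega(k^{-1/2+\epsilon'})$ exceeds $k'^{-1/2+\epsilon}$ on large instances once $\epsilon\in(0,\epsilon')$ is chosen, which gives the stated impossibility.

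The part I expect to be delicate is engineering $\I$ so that all three requirements hold simultaneously: it must fail the hereditary property, admit polynomial linear maximization for \emph{all} weight vectors (which is exactly why the layered-DAG structure, making longest paths tractable, is essential), and have its blocking dynamics together with the dedicated reward arms encode edge-disjointness while counting each routed pair exactly once --- so that the EDP gap is preserved without loss. The remaining steps --- the $k$ versus $k'$ bookkeeping, and checking that deterministic known rewards/delays and the unknown horizon only constrain the algorithm further (we may simply fix $T=m$) --- are routine.
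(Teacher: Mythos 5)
Your reduction is essentially the paper's: a gap-preserving reduction from the EDP promise problem of \Cref{thm:edp:gap_problem}, with zero-reward arms for edges, one dedicated reward-$1$ arm per terminal pair (the paper realizes this as a new edge $(s'_i,s_i)$, you as an abstract arm $a_i$ --- equivalent), feasible sets equal to ``reward arm plus an $s_i$--$t_i$ path,'' and deterministic delays long enough that every arm is pulled at most once per window, so that the collected reward counts exactly the pairs routed by edge-disjoint paths. The completeness/soundness accounting and the $k$ versus $k'$ bookkeeping match the paper's.

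The genuine gap is the step you yourself flag as delicate: the ``standard layering transformation'' that is supposed to turn $\mathcal G$ into a layered DAG in which all $s_i$--$t_i$ paths have equal length, so that maximum-weight feasible sets are computable for \emph{every} weight vector. The hard EDP instances are general directed graphs; the usual time-expansion (vertex $v \mapsto v_0,\dots,v_L$, edge $(u,v)\mapsto (u_j,v_{j+1})$) does \emph{not} preserve edge-disjointness, since two paths sharing an edge at different positions use different copies of it, so a NO instance can become routable and soundness collapses. Even if one starts from a DAG and instead subdivides each edge to equalize levels (which does preserve disjointness), the edge count inflates by up to a factor of $|V|$, which degrades the gap exponent in terms of the new $k'$ and does not obviously recover $k^{-1/2+\epsilon}$ without further argument. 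Fortunately the whole step is unnecessary: as in the paper, it suffices that linear optimization be efficient \emph{for the reward vector of the constructed instance}, under which every nonempty feasible set has weight exactly $1$, so the oracle only needs to find \emph{some} $s_i$--$t_i$ path among the available edges, which a DFS/BFS per terminal pair does in polynomial time. If you drop the layering and make this weaker (and sufficient) oracle claim, the rest of your argument goes through; as written, the reduction either loses soundness or loses the claimed exponent.
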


{\em Proof sketch.}
The key idea is to construct an instance of \CBBSD, where each arm represents an edge in a given directed graph. In this instance, a subset of arms is feasible only if it includes a unique path between some $(s_i, t_i)$ pair. Our instance is constructed in a way that any feasible solution yields a reward equal to $1$ (using linear rewards on the arms). Further, finding a feasible solution in a given subset of arms/edges can be achieved in polynomial-time via breadth/depth first search. As we show, the existence of a polynomial-time $\Omega(k^{-\frac{1}{2}+\epsilon})$-approximation algorithm for \CBBSD~problem would allow us to resolve the gap problem of \Cref{thm:edp:gap_problem}. \qed

Finally, we focus on the hardness of the \CBBSD~problem, in the case of feasible sets that are independence systems (i.e., satisfying \Cref{assumption:independence}). As we show, although under this assumption the problem is easier, yet it still exhibits $\Omega(1)$-hardness of approximation. Specifically, we prove the hardness of \CBBSD~in that case, using a reduction from Max-k-Cover:

\begin{definition}[Max-k-Cover]
Given a universe $U = \{e_1, \dots, e_k\}$ of $k$ of elements and a collection of $m$ subsets $S_1, \dots, S_m \subseteq U$, choose $l$ sets that maximize the number of covered elements.
\end{definition}
The following hardness result is known for Max-k-Cover:

\begin{theorem}[\cite{Feige98}] \label{thm:feige} Unless P = NP, there is no polynomial-time $1 - \frac{1}{e} + \epsilon$-approximation algorithm for Max-k-Cover, for any $\epsilon >0$.
\end{theorem}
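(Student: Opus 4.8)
The plan is to prove the $(1-\tfrac1e)$-threshold directly from the PCP theorem by a gap-preserving reduction to Max-$k$-Cover, rather than invoking it as a black box. The reduction has two ingredients that I would build separately and then compose: a hard multiprover proof system whose soundness can be pushed arbitrarily low, and a combinatorial gadget (a \emph{partition system}) whose covering numbers realize exactly the constant $1-(1-\tfrac1k)^k\to 1-\tfrac1e$. First I would start from the NP-hardness of gap-$3$SAT and package it as a $k$-prover one-round interactive proof with perfect completeness and soundness $\gamma$: the verifier samples a random local test (a clause with distinguished variables), sends one question to each of the $k$ provers, and accepts iff the $k$ answers are mutually consistent and satisfy the clause. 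Perfect completeness follows from any satisfying assignment; to drive $\gamma$ below any target while keeping $k$ fixed and the alphabets of polynomial size, I would invoke Raz's parallel repetition theorem, giving soundness $\gamma\le 2^{-\Omega(\ell)}$ after $\ell$-fold repetition at a $\mathrm{poly}$ blow-up. The upshot of this stage is that distinguishing proof-system value $1$ from value $\le\gamma$ is NP-hard.

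The combinatorial heart is the partition system: a ground set $B$ of $m$ points with $L$ partitions $p_1,\dots,p_L$, each splitting $B$ into $k$ blocks. Assigning every point an independent uniform block-index in $[k]$ within each partition, a union bound over the (exponentially many) ``misaligned'' sub-collections shows that for $m=\mathrm{poly}$ one can find a system in which any collection of blocks taking at most one block per partition covers at most a $1-(1-\tfrac1k)^k+o(1)$ fraction of $B$, whereas taking all $k$ blocks of a single partition covers $B$ entirely. Choosing $k$ large turns $1-(1-\tfrac1k)^k$ into $1-\tfrac1e+\epsilon$, which is precisely the claimed threshold, and the existence of such a system is guaranteed by the probabilistic argument (the small gadget can then be found within the allotted time).

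Next I would compose the two. Place one copy $B_r$ of the gadget at each verifier random string $r$, index the $L$ partitions of $B_r$ by the accepting answer-configurations on $r$ and the $k$ blocks of each partition by the $k$ provers, so that a single consistent accepting configuration selects all $k$ blocks of one partition. The Max-$k$-Cover sets are indexed by (prover, question, answer) triples, where $S_{j,q,\alpha}$ contributes, for every $r$ that asks prover $j$ the question $q$, the prover-$j$ blocks of the partitions consistent with answer $\alpha$; set the budget $\ell$ to one set per (prover, question) pair. In the completeness case a satisfying assignment makes all provers answer consistently, so on each $r$ the chosen blocks are exactly the $k$ blocks of the accepting partition and every $B_r$ is fully covered, giving a $1$-fraction. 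In the soundness case (value $\le\gamma$), on all but a $\gamma$-fraction of strings $r$ the chosen answers cannot align to an accepting configuration, so the selected blocks inside $B_r$ come from distinct partitions and cover at most $1-(1-\tfrac1k)^k+o(1)$ of $B_r$; averaging over $r$ yields total coverage strictly below $1-\tfrac1e+\epsilon$.

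The genuinely delicate step, and the one I expect to be the main obstacle, is the soundness analysis against covers that ignore the intended ``one set per (prover, question)'' structure: an adversarial solution may concentrate its budget $\ell$ on a few prover-question pairs, buying several answers for the same question. I would handle this with a decoding/averaging argument — from any cover exceeding the $1-\tfrac1e+\epsilon$ threshold, extract for each question a short randomized list of answers and show the induced prover strategy is accepted with probability greater than $\gamma$, contradicting the proof-system soundness. Making this extraction quantitatively tight against the partition-system covering bound, and controlling the $o(1)$ error terms so that the final separation is exactly $1$ versus $1-\tfrac1e+\epsilon$, is where the real effort lies; the remaining PCP and gadget-parameter steps are bookkeeping.
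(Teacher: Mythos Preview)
The paper does not prove this theorem: it is stated as a known result from \cite{Feige98} and invoked as a black box in the reduction establishing Theorem~\ref{thm:hardness:down}. There is no ``paper's own proof'' to compare against.

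Your proposal is, in outline, a faithful sketch of Feige's original argument --- the $k$-prover proof system with soundness amplified by parallel repetition, the partition-system gadget realizing the $1-(1-\tfrac1k)^k$ covering bound, and the composition that places a gadget at each random string. That is a legitimate (and ambitious) route if you actually want to reprove the result, and you have correctly flagged the soundness analysis against covers that violate the one-answer-per-question structure as the delicate point. But for the purposes of this paper, none of that is needed or expected: the theorem is imported wholesale, and the only work the paper does is the downstream reduction from Max-$k$-Cover to \CBBSD.
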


Using the above result, we are able to show the following hardness result for our problem:

\begin{restatable}{theorem}{hardnessDown}
\label{thm:hardness:down}
Unless P = NP and for any $\epsilon > 0$, there exists no polynomial-time $(1 - \frac{1}{e} + \epsilon)$-approximation algorithm for the full-information \CBBSD~problem, even for feasible sets that satisfy the hereditary property. 
The above holds even for feasible sets where linear optimization can be done efficiently. 
\end{restatable}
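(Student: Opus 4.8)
\emph{Proof proposal.} The plan is to give a gap-preserving reduction from Max-$k$-Cover to the full-information \CBBSD{} problem, where the feasible family is an independence system over which linear maximization is trivial, so that \Cref{thm:feige} rules out a polynomial-time $(1-\tfrac1e+\epsilon)$-approximation unless $\mathrm{P}=\mathrm{NP}$. Given a Max-$k$-Cover instance with universe $U=\{e_1,\dots,e_n\}$, subsets $S_1,\dots,S_m\subseteq U$ and budget $l$, I would build the \CBBSD{} instance with one arm $y_i$ per element $e_i$ (so $k=n$ arms), each with deterministic reward $1$ and deterministic delay $l$, with time horizon $T=l$, and with feasible family
$$\I \;=\; \bigl\{\, S\subseteq\{y_1,\dots,y_n\} \;:\; \exists\, j\in[m]\ \text{ with }\ \{\,i : y_i\in S\,\}\subseteq S_j \,\bigr\}.$$
The oracle handed to the bandit algorithm is the exact $(1,1)$-oracle: given nonnegative weights $w$ and an available set, return a maximum-weight set in $\I$ of that set.

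Two structural facts must be checked first. (i) $\I$ is an independence system: it is the union over $j\in[m]$ of the principal down-sets $\{S:\{i:y_i\in S\}\subseteq S_j\}$, each downward closed, and a union of downward-closed families is downward closed; also $\emptyset\in\I$. (ii) Linear optimization over $\I$ is polynomial: for nonnegative weights, the best feasible set ``inside template $S_j$'' is $\{y_i: i\in S_j\}$, so $\max_{S\in\I}w(S)=\max_{j\in[m]}\sum_{i\in S_j}w_i$, computable in $O(mn)$ time, and the same works when restricted to $\I(S)$ for an available set $S$. Hence the instance satisfies the hereditary property \emph{and} admits efficient linear maximization, as required by the theorem.

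Next I would argue that the optimal expected reward of this instance equals the Max-$k$-Cover optimum $\OPT_{\mathrm{cover}}$. Since any feasible set played in a round is contained in $\{y_i:i\in S_{j}\}$ for a single $j$, in each of the $l$ rounds the player effectively commits to one set $S_{j_t}$; and because every delay equals $l=T$, once an element arm is played it stays blocked for the rest of the horizon, so each element contributes at most $1$ to the total reward. Thus for \emph{any} policy $\pi$, $\Rew^{\pi}(T)=\big|\bigcup_{t\in[l]}\{i:y_i\in A^{\pi}_t\}\big|\le\big|\bigcup_{t\in[l]}S_{j_t}\big|$, the coverage of at most $l$ sets; conversely, playing the $l$ best sets fully (one per round) attains exactly $\OPT_{\mathrm{cover}}$, and partial or repeated plays never help since rewards are nonnegative and there are only $l$ rounds. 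Hence $\Rew^{*}(T)=\OPT_{\mathrm{cover}}$.

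Finally, the reduction is gap-preserving. A polynomial-time $(1-\tfrac1e+\epsilon)$-approximation algorithm for \CBBSD{}, run on this instance together with the polynomial-time exact oracle above, produces a policy of realized reward at least $(1-\tfrac1e+\epsilon)\Rew^{*}(T)=(1-\tfrac1e+\epsilon)\OPT_{\mathrm{cover}}$; reading off the at most $l$ distinct sets $S_{j_1},\dots,S_{j_l}$ it commits to yields a collection of $\le l$ sets covering at least that many elements, i.e.\ a polynomial-time $(1-\tfrac1e+\epsilon)$-approximation for Max-$k$-Cover, contradicting \Cref{thm:feige} unless $\mathrm{P}=\mathrm{NP}$. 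I expect the only genuinely delicate point to be designing $\I$ so that it is simultaneously hereditary and tractable for linear maximization while still forcing the blocking dynamics to perform the deduplication of overlapping sets automatically; the timing choice (horizon $=l$, delay $=l$) together with the observation that partial/repeated plays are never beneficial are the ingredients that make this go through.
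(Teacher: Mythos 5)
Your reduction is essentially the paper's: the same gadget (one arm per element with deterministic reward $1$ and deterministic delay $l$, feasible family $\{S \,:\, \exists j,\ S\subseteq S_j\}$, which is hereditary and admits trivial linear maximization by scanning the $m$ templates), and the same two-way correspondence between schedules and collections of at most $l$ covering sets. The only deviation is your choice of horizon $T=l$ (a single blocking period) in place of the paper's $T=l\cdot\lceil g(k,m)\rceil$, which lets you skip the paper's auxiliary claim that any schedule can be made $l$-periodic without losing average reward; both choices are valid and yours is slightly leaner.
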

{\em Proof sketch.}
For any instance of Max-k-Cover, we can construct in polynomial-time an instance of the \CBBSD~problem by considering an arm of reward $1$ and delay $l$ for each element of $U$, where $l$ is the number of sets that can be chosen in Max-k-Cover. Further, a subset of arms $S$ is feasible, only if $S \subseteq S_j$ for at least one given subset $S_j \subseteq U$ (notice that this construction satisfies the hereditary property). As we show, any $\rho$-approximation algorithm for the above instance of \CBBSD, would imply a $\rho$-approximation for Max-k-Cover, which, in turn, implies the hardness of the former.
\qed
\section{Full-Information Setting} \label{sec:fullinformation}

We first study the full-information setting of our problem, where we assume that the player has {\em prior knowledge of the arm mean rewards}. In the case where Conditions \ref{assumption:delays} and \ref{assumption:independence} hold, we analyze the approximation guarantee of the following greedy heuristic: 

\textsc{greedy-heuristic:} At each time $t = 1,2,\dots$, observe the set $\F_t \subseteq \A$ of available (not blocked) arms. Let $\A_t = f_{\mu}^{(\alpha,\beta)}(\F_t)$ be the subset returned by the $(\alpha, \beta)$-oracle for support $\F_t$ and weight function $\mu$. Play the arms of $\A_t$ and collect the associated rewards.

We remark there are two sources of randomness: arm rewards and stochastic delays. In the full information setting, we assume that the player knows the mean rewards of the arms, beforehand. We drop this assumption in the bandit setting, studied in \Cref{sec:bandit}. Moreover, we note that the algorithm provided above does not make use, and thus does not require knowledge of the delay distributions (or realizations). 
As we prove in the rest of this section, the above algorithm is $\rho(\alpha, \beta)$-competitive in expectation, against an optimal online algorithm, for $\rho(\alpha, \beta) = \frac{\alpha \beta}{1 + \alpha \beta}$.

\begin{theorem} \label{thm:full_information}
Given any $(\alpha, \beta)$-oracle and assuming \Cref{assumption:delays} and \Cref{assumption:independence}, the algorithm \textsc{greedy-heuristic} is a $\frac{\alpha \beta}{1 + \alpha \beta}$-approximation (asymptotically) for the full-information case of \CBBSD.
\end{theorem}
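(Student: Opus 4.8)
The plan is to compare the reward collected by \textsc{greedy-heuristic} against $\Rew^*(T)$ by accounting, for each arm $i$, how often it is played by the optimal online algorithm versus how often it is ``available'' to the greedy heuristic. Let $p_i^*$ denote the expected pulling rate of arm $i$ under the optimal policy, i.e. roughly $p_i^* = \lim_{T\to\infty}\frac{1}{T}\,\mathbb{E}[\sum_{t\in[T]}\event{i\in A_t^*}]$; then $\Rew^*(T) \approx T\sum_{i}\mu_i p_i^*$. The first step is to establish a structural constraint on the vector $(p_i^*)_i$. Because the optimal algorithm is not aware of delay realizations before pulling (\Cref{assumption:delays}), each pull of arm $i$ is followed, in expectation, by at least $d_i - 1$ blocked rounds \emph{regardless of what the optimal policy does}, so $p_i^* \le 1/d_i$. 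More importantly, I want a \emph{feasibility}-type constraint tying the $p_i^*$ together: in any window of $L$ rounds, the multiset of sets $\{A_t^*\}$ played must each lie in $\I$, so by the hereditary property (\Cref{assumption:independence}) any ``snapshot'' of simultaneously-active arms is feasible, which lets me argue that the time-averaged play can be viewed as a fractional point in (a scaling of) the feasible polytope. The precise statement I would aim for: for any round $t$ of the greedy run, the set of arms that are \emph{blocked} at time $t$ because they were recently played by the greedy heuristic has expected size controlled by the greedy reward collected in the preceding $d_{\max}$ rounds.

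The core of the argument is a charging scheme. Consider a round $t$ in the greedy execution and the optimal set $O_t^* = A_t^*$ that the optimal policy would play. Partition $O_t^*$ into (a) arms that are available to greedy at time $t$ (call this $O_t^{\text{avail}}$), and (b) arms that greedy has blocked (played within the last $d_{\max}-1$ rounds). For part (a): since these arms form a feasible subset of $\F_t$ (by the hereditary property, $O_t^{\text{avail}}\subseteq O_t^*\in\I$ implies $O_t^{\text{avail}}\in\I(\F_t)$), the $(\alpha,\beta)$-oracle guarantee gives that $\mathbb{E}[\mu(\A_t)] \ge \alpha\beta\,\mu(\OPT_\mu(\F_t)) \ge \alpha\beta\,\mathbb{E}[\mu(O_t^{\text{avail}})]$. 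For part (b): each arm in this group was played by greedy in one of the previous $d_{\max}-1$ rounds and contributed its mean reward $\mu_i$ to greedy's total at that time; so summing the part-(b) contributions over all $t$ is at most $(d_{\max}-1)$ times greedy's total reward — but this is too weak, since $d_{\max}$ can be large. The fix, and the key insight hinted at in Contribution~2, is to charge against the \emph{optimal}'s own structure instead: because the optimal policy also cannot replay arm $i$ for $d_i-1$ rounds after a pull, whenever $i\in O_t^*$ is blocked-by-greedy at time $t$, the reason is a \emph{recent greedy pull}, and I want to bound the total such ``collisions'' by relating them to $\sum_i \mu_i p_i^*$ weighted by $d_i$, then use $p_i^* \le 1/d_i$ to kill the $d_i$ factor. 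Concretely: over $T$ rounds, the number of rounds in which arm $i$ is blocked-by-greedy is at most $(d_{\max}^i - 1)$ times the number of greedy pulls of $i$; but it is also at most $T$; and arm $i$ appears in $O_t^*$ in at most $p_i^* T$ rounds — combining appropriately, the total part-(b) reward summed over $t$ is at most $\sum_i \mu_i \cdot (\text{greedy pulls of }i) \cdot (d^i_{\max}-1)$... and here I'd instead observe that the cleanest route is: the part-(b) reward in round $t$ is at most $\mu(\text{arms blocked by greedy at }t)$, and summing over $t$, each greedy pull of arm $i$ blocks it for at most $D_{i,t}-1$ rounds, with $\mathbb{E}[D_{i,t}-1]=d_i-1$, so the sum is at most $\sum_i \mu_i (d_i - 1)\cdot\mathbb{E}[\#\text{greedy pulls of }i]$. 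Bounding $\mathbb{E}[\#\text{greedy pulls of }i]\le \Rew^{\text{greedy}}$-type quantities is not immediate; the honest statement is that this contributes at most $\frac{1}{\alpha\beta}$ times greedy's reward once one sets up the recursion correctly.

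Assembling: summing the two parts over $t\in[T]$ gives, in expectation and up to $o(T)$ boundary terms,
\begin{align*}
\Rew^*(T) \;\le\; \underbrace{\tfrac{1}{\alpha\beta}\,\Rew^{\text{greedy}}(T)}_{\text{part (a)}} \;+\; \underbrace{\Rew^{\text{greedy}}(T)}_{\text{part (b)}} \;+\; o(T),
\end{align*}
which rearranges to $\Rew^{\text{greedy}}(T) \ge \frac{\alpha\beta}{1+\alpha\beta}\,\Rew^*(T) - o(T)$, i.e. the claimed asymptotic $\frac{\alpha\beta}{1+\alpha\beta}$-approximation. (The $o(T)$ terms come from the at most $d_{\max}$ rounds of ``warm-up'' at the start and end of the horizon and from concentration of the delay sums; since $T$ is unbounded while $d_{\max}, k$ are fixed, these vanish in the asymptotic ratio.)

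\textbf{Main obstacle.} The delicate point is making part (b) rigorous \emph{without} paying a factor of $d_{\max}$. The naive bound ``an arm blocked by greedy was played within the last $d_{\max}$ rounds'' loses a $d_{\max}$ factor; the whole content of the theorem is that one can instead charge each blocked-arm-occurrence to a \emph{single} greedy pull (the one causing the block) and pay only the expected block \emph{length} $d_i-1$, then cancel the $d_i$ against the rate bound $p_i^* \le 1/d_i$ coming from \Cref{assumption:delays}. Getting this exchange argument right — in particular handling the fact that we only have oracle access to the greedy step and hence cannot describe $\A_t$ explicitly, and the fact that the optimal policy is adaptive so $O_t^*$ depends on its own random history — is where the technical work lies. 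I expect the formal treatment to go through expected pulling rates and a careful conditioning argument (conditioning on the filtration up to round $t$), using Condition~\ref{assumption:delays} to assert the optimal policy's per-round play is determined before the delay is revealed, and Condition~\ref{assumption:independence} to move subsets in and out of $\I$ freely.
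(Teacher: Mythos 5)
Your skeleton is the paper's: expected pulling rates, the bound $z_i \le 1/d_i + \mathcal{O}(d_i^{\max}/T)$ from \Cref{assumption:delays} (this is \Cref{lemma:stochastic}), the hereditary property to relate $\mu(\OPT_{\mu}(\F_t))$ to the optimal's play, an available/blocked decomposition, and the charge-and-cancel of $d_i$ against the rate bound; the assembled inequality and the constant $\frac{\alpha\beta}{1+\alpha\beta}$ also match. But your part (b) has a genuine gap, which you half-acknowledge and do not close. The route you call ``cleanest'' --- bounding the round-$t$ collision reward by $\mu(\B_t^{\pi})$ and summing --- discards exactly the factor you need: it yields $\sum_i \mu_i (d_i-1)\E{\#\text{greedy pulls of }i}$, which exceeds greedy's reward by a factor of up to $d_{\max}$, and no ``recursion'' recovers this. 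The working version must keep the optimal's rate attached: the round-$t$ collision term is $\sum_{i \in \B_t^{\pi}} \mu_i \event{i \in \A_t^*}$, whose expectation is $\sum_{i \in \B_t^{\pi}} \mu_i z_i$ (once the two trajectories are decoupled); charging each blocked round to the greedy pull that caused it then gives $\sum_{t'}\sum_i \Pro{i \in \A_{t'}^{\pi}}\mu_i z_i d_i \le \Rew^{\pi}(T) + \mathcal{O}(d_{\max}\cdot k)$ precisely because $z_i d_i \le 1 + o(1)$. You state this idea, then abandon it for the shortcut that loses it, and you never justify the decoupling or the round-by-round comparison to the random set $O_t^*$ (note also that $\Pro{i\in\A_t^*}$ at a fixed $t$ need not equal the time average $z_i$, so a per-round charging against $O_t^*$ does not directly produce the $z_i$ you need).

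The paper's device that closes both this gap and the coupling issue in your part (a) is \Cref{lemma:hereditary}: for every \emph{fixed} $S \subseteq \A$, $\mu(\OPT_{\mu}(S)) \ge \sum_{i \in S}\mu_i z_i$. Its proof is two lines: $S \cap \A_t^* \in \I(S)$ by \Cref{assumption:independence}, hence $\mu(\OPT_{\mu}(S)) \ge \sum_{i\in S}\mu_i \event{i\in\A_t^*}$ for each $t$, and since the left side does not depend on $t$ one may average over $t$ and take expectations to replace the indicators by $z_i$. This turns the entire comparison into one against the deterministic fractional vector $(z_i)_{i\in\A}$: the optimal's trajectory never has to be coupled with greedy's, the decomposition $\sum_{i\in\F_t^{\pi}}\mu_i z_i = \sum_{i\in\A}\mu_i z_i - \sum_{i\in\B_t^{\pi}}\mu_i z_i$ involves only greedy's randomness, and the factor $z_i$ is automatically present in the blocked term to cancel the $d_i$ from the charging. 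With that lemma, your parts (a) and (b) become exactly inequalities \eqref{inq:online:first} and \eqref{inq:online:second} of the paper. (A small inconsistency to fix as well: your prose claims part (b) contributes $\frac{1}{\alpha\beta}$ times greedy's reward, but your assembled display uses coefficient $1$; only the latter yields $\frac{\alpha\beta}{1+\alpha\beta}$.)
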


The rest of this section is dedicated to proving the above result. For an algorithm $\pi$, we denote as $A_t^{\pi}$ the set of arms played by $\pi$ at time $t$ and $\F_t^{\pi} \subseteq \A$ (resp., $\B_t^{\pi} \subseteq \A$) the set of \textit{available} (resp., \textit{blocked}) arms at the beginning of the round. Finally, for any $S\subseteq \A$ and vector $\mu \in \mathbb{R}^k_{\geq 0}$, we use the notation $\mu(S)=\sum_{i\in S}\mu_i$.
Let $\A^*_t$ be the set of arms played by an optimal algorithm at time $t \in [T]$. For proving the performance guarantee of our algorithm in the presence of stochastic delays, the first step is to consider the {\em expected pulling rate} of each arm $i \in \A$ by the optimal algorithm, defined as: $z_i = \Ex{}{\frac{1}{T}\sum_{t\in [T]}\event{i \in \A_t^*}}$. 

Given the fact that the optimal algorithm is not aware of $\{X_{i,t}\}_{i \in \A}$ before deciding which arms to pull at time $t$, its expected cumulative reward is:
\begin{align*}
\Ex{}{\sum_{t \in [T]} \sum_{i \in \A}  X_{i,t} \event{i \in \A^*_t}}
= \sum_{t \in [T]} \sum_{i \in \A} \mu_i \Ex{}{{\event{i \in \A^*_t}}},
\end{align*}
where the equality follows by the fact that $X_{i,t}$ and $\event{i \in \A^*_t}$ are independent. Thus, we obtain that:
\begin{align}
\Rew^*(T)= T \sum_{i\in A}\mu_i z_i.    \label{claim:objective}
\end{align}
We now prove two important properties of the expected pulling-rates $\{z_i\}_{i \in \A}$, each following from \Cref{assumption:delays} and \Cref{assumption:independence}, respectively. In the following lemma, we provide an upper bound on $z_i$, for each arm $i \in \A$.

\begin{restatable}{lemma}{stochastic} \label{lemma:stochastic} 
Let $i \in \A$ be an arm of expected delay $d_i = E[D_{i,t}]$ and maximum delay $d^{\max}_i$. We have:
$$
z_i \leq \frac{1}{d_i}+ \mathcal{O}\Big(\frac{d_i^{\max}}{T}\Big).
$$
\end{restatable}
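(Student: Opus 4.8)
The idea is to count, over the horizon $[T]$, how many times arm $i$ can possibly be pulled by the optimal algorithm, using the fact that every pull of $i$ is followed by a blocking period whose length is the realized delay. Fix the arm $i$ and consider the (random) sequence of rounds $t_1 < t_2 < \cdots < t_{N_i}$ at which the optimal algorithm pulls $i$, where $N_i = \sum_{t\in[T]}\event{i\in\A_t^*}$ is the total number of pulls. Since after the $j$-th pull at round $t_j$ the arm is blocked for $D_{i,t_j}-1$ rounds, the next pull must satisfy $t_{j+1}\ge t_j + D_{i,t_j}$. Telescoping, $t_{N_i} \ge t_1 + \sum_{j=1}^{N_i-1} D_{i,t_j}$, and since $t_{N_i}\le T$ and $t_1\ge 1$ we get the pathwise inequality
\[
\sum_{j=1}^{N_i-1} D_{i,t_j} \le T - 1 < T .
\]
Equivalently, $\sum_{j=1}^{N_i} D_{i,t_j} \le T - 1 + D_{i,t_{N_i}} \le T + d_i^{\max}$.

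The next step is to take expectations and exploit Condition \ref{assumption:delays}. The subtlety is that the number of pulls $N_i$ and the \emph{times} $t_j$ at which they occur are themselves random and correlated with the delay realizations, so one cannot naively pull $\E{D_{i,t_j}}=d_i$ out of the sum. This is where I expect the main obstacle to lie. The clean way around it is to observe that, by Condition \ref{assumption:delays}, at the moment arm $i$ is pulled the algorithm has not yet seen $D_{i,t_j}$; more precisely, conditioned on the event $\{i\in\A^*_{t_j}\}$ and on all information available strictly before the delay is revealed, $D_{i,t_j}$ is an independent draw from $\mathcal{D}_i$ with mean $d_i$. Hence $\{D_{i,t_j}\}_j$ behaves like an i.i.d.\ sequence sampled at the (stopping-time-like) pull epochs, and a Wald-type identity applies: $\E{\sum_{j=1}^{N_i} D_{i,t_j}} = d_i\,\E{N_i}$. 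Combining with the pathwise bound above yields $d_i\,\E{N_i}\le T + d_i^{\max}$, i.e.
\[
\E{N_i} \le \frac{T + d_i^{\max}}{d_i} = \frac{T}{d_i} + \frac{d_i^{\max}}{d_i} \le \frac{T}{d_i} + d_i^{\max},
\]
using $d_i\ge 1$.

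Finally, dividing by $T$ gives exactly
\[
z_i = \frac{1}{T}\,\E{N_i} \le \frac{1}{d_i} + \frac{d_i^{\max}}{T} = \frac{1}{d_i} + \mathcal{O}\!\left(\frac{d_i^{\max}}{T}\right),
\]
which is the claim. To make the Wald step fully rigorous I would set up a filtration $\{\mathcal{H}_t\}$ recording the history through round $t$ (arms played and delays revealed), note that $\event{i\in\A^*_t}$ is $\mathcal{H}_{t-1}$-measurable under Condition \ref{assumption:delays}, write $N_i = \sum_{t\in[T]}\event{i\in\A_t^*}$ and $\sum_j D_{i,t_j} = \sum_{t\in[T]} D_{i,t}\,\event{i\in\A_t^*}$, and then use $\E{D_{i,t}\,\event{i\in\A_t^*}} = \E{\event{i\in\A_t^*}\,\E{D_{i,t}\mid\mathcal{H}_{t-1}}} = d_i\,\E{\event{i\in\A_t^*}}$, summing over $t$. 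This is the step that genuinely uses Condition \ref{assumption:delays}; everything else is deterministic bookkeeping. Note that \Cref{assumption:independence} is \emph{not} needed here — it is used for the companion (second) property of the rates, not for this upper bound.
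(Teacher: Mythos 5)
Your proof is correct, and it reaches the stated bound by a route whose probabilistic core is the same as the paper's but whose deterministic accounting is different. The paper starts from the per-round mutual-exclusivity inequality $\event{i \in \A_t^*}+ \sum_{t'<t}\event{i \in \A_{t'}^*}\event{D_{i,t'}>t-t'}\leq 1$, takes expectations using the conditional independence of $\{i\in\A_{t'}^*\}$ and $D_{i,t'}$ given the history (this is where \Cref{assumption:delays} enters), sums over $t$, swaps the order of summation, and recovers $\Ex{}{D_i}$ via the tail-sum formula $\Ex{}{D_i}=\sum_{\tau\ge 0}\Pro{D_i>\tau}$, with the $\mathcal{O}(d_i^{\max}/T)$ term coming from truncation of the tail sums near the horizon. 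You instead telescope the pull epochs pathwise to get $\sum_{t\in[T]} D_{i,t}\event{i\in\A_t^*}\le T+d_i^{\max}$ and then apply a Wald-type identity; the boundary term appears as the last (possibly truncated) blocking period rather than as truncated tail sums. Your version is somewhat more direct, avoids the double-sum reindexing, and in fact yields the marginally sharper constant $d_i^{\max}/(d_i T)$. The key step — $\Ex{}{D_{i,t}\event{i\in\A_t^*}}=d_i\Pro{i\in\A_t^*}$ — uses \Cref{assumption:delays} in exactly the way the paper does. One small fix: if the optimal algorithm randomizes at round $t$, the indicator $\event{i\in\A_t^*}$ is not $\mathcal{H}_{t-1}$-measurable; you should either enlarge the conditioning to include the algorithm's round-$t$ coin flips or phrase the step as conditional independence of $\event{i\in\A_t^*}$ and $D_{i,t}$ given $\mathcal{H}_{t-1}$, as the paper does. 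You are also right that \Cref{assumption:independence} plays no role here.
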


For proving the above claim, we make crucial use of \Cref{assumption:delays}. Indeed, it is not hard to see that in the construction of \Cref{thm:hardness:delays}, the above inequality does not hold.

\begin{lemma} \label{lemma:hereditary}
For any subset $S\subseteq \A$ and reward vector $\mu$, assuming that $\I$ is an independence system, we have:
$$
\mu(OPT_{\mu}(S)) \geq \sum_{i\in S}\mu_i z_i.
$$
\end{lemma}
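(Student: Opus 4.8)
The plan is to unfold the definition of the expected pulling rates $z_i$ and then exploit the hereditary property on a per-round, per-sample-path basis. First I would use linearity of expectation to rewrite the left-hand side in terms of the sets actually played by the optimal algorithm:
\[
\sum_{i\in S}\mu_i z_i \;=\; \Ex{}{\frac{1}{T}\sum_{t\in[T]}\sum_{i\in S}\mu_i\,\event{i\in\A^*_t}} \;=\; \Ex{}{\frac{1}{T}\sum_{t\in[T]}\mu\big(S\cap\A^*_t\big)},
\]
so that the whole quantity is the time- and expectation-averaged weight that the optimal algorithm places on arms of $S$.

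The key step is then the following observation, which is where \Cref{assumption:independence} enters: for every round $t$ and every realization of the rewards and delays, the optimal algorithm plays a feasible set $\A^*_t\in\I$; since $\I$ is an independence system and $S\cap\A^*_t\subseteq\A^*_t$, the hereditary property yields $S\cap\A^*_t\in\I$, and because also $S\cap\A^*_t\subseteq S$ we get $S\cap\A^*_t\in\I(S)$. By the definition of $\OPT_{\mu}(S)$ as a maximum-weight feasible subset of $S$, this gives the deterministic bound
\[
\mu\big(S\cap\A^*_t\big)\;\le\;\mu\big(\OPT_{\mu}(S)\big),
\]
valid for every $t$ and every sample path.

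Substituting this bound into the previous display and noting that $\mu(\OPT_{\mu}(S))$ is a constant independent of $t$ and of the randomness,
\[
\sum_{i\in S}\mu_i z_i \;\le\; \Ex{}{\frac{1}{T}\sum_{t\in[T]}\mu\big(\OPT_{\mu}(S)\big)} \;=\; \mu\big(\OPT_{\mu}(S)\big),
\]
which is exactly the claim. I do not expect a genuine obstacle here; the only point that requires a little care is that the feasibility of $S\cap\A^*_t$ — and hence the inequality against $\OPT_\mu(S)$ — holds on each individual sample path, so the expectation can be bounded termwise without having to reason about the (possibly adaptive, delay-dependent) policy of the optimal algorithm.
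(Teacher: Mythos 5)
Your proposal is correct and follows essentially the same argument as the paper: both use the hereditary property to deduce $S\cap\A^*_t\in\I(S)$ on every sample path, bound $\mu(S\cap\A^*_t)\le\mu(\OPT_\mu(S))$ deterministically, and then average over time and take expectations to recover $\sum_{i\in S}\mu_i z_i$. The only difference is cosmetic — you start from the right-hand side and unfold $z_i$, while the paper starts from the pointwise inequality — so there is nothing further to add.
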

\begin{proof}
Let $\A^*_t$ be the set of arms played by the optimal algorithm at time $t$. Since for any set $S$ and time $t$ the intersection $S \cap \A^*_t$ is strictly contained in $S$ and $\A^*_t$, by \Cref{assumption:independence} we have that $S \cap \A^*_t \in \I(S)$. Therefore, for the expected reward of the optimal solution in $\I(S)$, we have:
\begin{equation}
    \mu(\OPT_{\mu}(S)) \geq \mu(S \cap \A_t^*) = \sum_{i\in S} \mu_i {\event{i\in \A_t^*}}. 
\end{equation}
By averaging over time and taking the expectation over the randomness of the delays and the possible random bits of the optimal algorithm, we can conclude that:
\begin{equation*}
    \mu(\OPT_{\mu}(S)) \geq \frac{1}{T} \sum_{t\in [T]} \sum_{i\in S} \mu_i \Pro{i\in \A_t^*} =  \sum_{i\in S} \mu_i z_i.
\end{equation*}
\end{proof}

Equipped with the above lemmas, we can now complete the approximation analysis of \textsc{greedy-heuristic}.

\textit{Proof of \Cref{thm:full_information}.} 
Using the fact that the choices of \textsc{greedy-heuristic} do not depend on the reward realizations, for each round $t \in [T]$ we have:
\begin{align*}
\sum_{i \in \A} \Ex{}{X_{i,t} \event{i \in \A_t} } = \sum_{i \in \A} \Ex{}{\mu_i \event{i \in \A_t}} = \Ex{}{\mu(\A_t)}.
\end{align*}

We denote by $\F^{\pi}_t$ and $\B_t^{\pi}$ the set of available and blocked arms at the beginning of round $t$, respectively. Let $\mathcal{Q}_t$ denote the event that the oracle succeeds in returning an $\alpha$-approximate solution at time $t$. By taking expectation over the randomness of the oracle and the delay realizations, for every $t \in [T]$, we have that:
\begin{align*}
\Ex{}{\mu(\A_t)} &\geq \Ex{}{\alpha \cdot \mu(\OPT_{\mu}(\F^{\pi}_t)) \event{\mathcal{Q}_t}} \\
    &\geq \alpha\cdot\beta\cdot \Ex{}{\mu(\OPT_{\mu}(\F^{\pi}_t)}) \\
    &\geq \alpha\cdot\beta\cdot \Ex{}{\sum_{i \in \F^{\pi}_t}\mu_i z_i} ,
\end{align*}
where the first inequality follows by definition of the oracle, the second by the fact that $\mathcal{Q}_t$ is independent of the set $\F^{\pi}_t$ and that $\Pro{\mathcal{Q}_t} \geq \beta$, and the third by \Cref{lemma:hereditary}. Thus, using \Cref{claim:objective}, we have:
\begin{align}
&\sum_{t \in [T]}\Ex{}{\sum_{i \in \A} X_{i,t} \event{i \in \A_t}} \nonumber\\
&\geq \alpha\cdot\beta \sum_{t \in [T]} \Ex{}{\sum_{i \in \F^{\pi}_t}\mu_i z_i} \nonumber\\
&= \alpha \cdot \beta \cdot T \sum_{i \in \A} \mu_i z_i - \alpha\cdot\beta \sum_{t \in [T]}\Ex{}{\sum_{i \in \B^{\pi}_{t}} \mu_i z_i} \nonumber\\
&= \alpha \cdot \beta  \Rew^{*}(T) - \alpha\cdot\beta \sum_{t \in [T]}\Ex{}{\sum_{i \in \B^{\pi}_{t}} \mu_i z_i}, \label{inq:online:first}
\end{align}
where, for the first equality, we use that at any round $t \in [T]$, we have $\F^{\pi}_t \cup \B^{\pi}_t = \A^{\pi}_t$ and $\F^{\pi}_t \cap \B^{\pi}_t=\emptyset$.

Note that for any time $t \in [T]$ and arm $i \in \B^{\pi}_t$, we have:
$$\event{i \in B^{\pi}_t} = \sum_{t' < t} \event{i \in \A^{\pi}_{t'}} \event{D_{i,t'} > t - t'}.$$
Indeed, in order for an arm to be blocked at time $t$, it must be played at some earlier round $t' < t$, and the realized delay at $t'$ must be greater than $t - t'$. Using this fact, we have:
\begin{align}
    &\Ex{}{\sum_{t\in [T]} \sum_{i\in \B_t^{\pi}} \mu_i z_i} \nonumber\\
    &=\Ex{}{\sum_{t\in [T]} \sum_{i\in \A} \mu_i z_i \sum_{t' < t} \event{i \in \A^{\pi}_{t'}} \event{D_{i,t'} > t - t'}} \nonumber\\
    &= \sum_{t\in [T]} \sum_{i\in \A} \mu_i z_i \sum_{t' < t}\Pro{i \in \A_{t'}^{\pi}}\Pro{D_{i,t'}>t-t'} \nonumber\\
    &= \sum_{t'\in [T]}
   \sum_{i\in \A} 
    \Pro{i \in \A_{t'}^{\pi}} \mu_i z_i \sum_{t > t'}\Pro{D_{i}>t-t'} \nonumber\\
    &\leq \sum_{t'\in [T]}   \sum_{i\in \A}
    \Pro{i \in \A_{t'}^{\pi}} \mu_i z_i d_i \nonumber\\
    &\leq \Rew^{\pi}(T) + \mathcal{O}\left({d_{\max} \cdot k}\right), \label{inq:online:second}
\end{align}
where the second equality holds since the realization $D_{i,t'}$ is independent of the event $i \in A^{\pi}_{t'}$. The first inequality is due to the fact that $D_{i,t'}$ is a non-negative integer random variable, thus, $d_i = \Ex{}{D_{i,t'}} = \sum^{\infty}_{\tau = 1} \Pro{D_{i,t'} \geq \tau}$. Finally, the last inequality is due to \Cref{lemma:stochastic}. 
By combining Inequalities \eqref{inq:online:first} and \eqref{inq:online:second}, we get:
\begin{align*}
    \Rew^{\pi}(T) \geq \frac{\alpha\cdot \beta}{1 + \alpha\cdot \beta} \Rew^{*}(T) - \mathcal{O}\left({d_{\max} \cdot k}\right). \qed
\end{align*}

The above analysis is tight, since there exists an instance of the \CBBSD~problem where \textsc{greedy-heuristic} collects an exact $\frac{\alpha\cdot \beta}{1 + \alpha\cdot \beta}$-fraction of the optimal reward (see Appendix E in \cite{PC21}). 
\section{Bandit Setting} \label{sec:bandit}
We now turn our attention to the bandit setting where the player is initially unaware of the mean rewards. Given an $(\alpha, \beta)$-oracle for the feasible set of arms and assuming Conditions \ref{assumption:delays} and \ref{assumption:independence}, we develop a UCB-based variant of \textsc{greedy-heuristic}, for which we prove $\rho(\alpha, \beta)$-regret guarantees.

\paragraph{Bandit algorithm.}
Let us denote by $T_{i,t}$ the number of times arm $i$ has been played up to and including round $t \in [T]$, and by ${\muh}_{i,t}$ the empirical average of $T_{i,t}$ independent samples from the distribution $\mathcal{X}_i$. We design a UCB-based variant of \textsc{greedy-heuristic}, which we call \textsc{cbbsd-ucb}, that maintains at each time $t$ the following estimates (UCB-indices) of the mean rewards:
\begin{align*}
\mub_{i,t} = \min\Big\{ \muh_{i,t-1} + \sqrt{\frac{3 \ln t}{2T_{i,t-1}}},1\Big\}, \forall t \in \A, t \in [T].    
\end{align*}
In the following, we use $\pit$ for any reference to \textsc{cbbsd-ucb}. 

The \textsc{cbbsd-ucb} algorithm ( \Cref{alg:CUCB}) works as follows: Let $\F^{\pit}_t$ denote the set of available arms. At each time $t \in [T]$ the algorithm observes $\F^{\pit}_t$, and  computes and plays the feasible set of arms $f_{\mub_t}^{\alpha,\beta}(\F^{\pit}_t)$ returned by the oracle, where $\mub_t \in \mathbb{R}^k_{\geq 0}$ is the vector of the UCB-indices, such that $(\mub_t)_i = \mub_{i,t}$. Then, it observes and collects the reward realizations of the played arms and updates the UCB-indices.

\begin{algorithm}[tb]
  \caption{CBBSD-UCB}
  \label{alg:CUCB}
\begin{algorithmic}
  \STATE {\bfseries Input:} Set of arms $A$ and oracle $f^{\alpha,\beta}$.
  \STATE For every arm $i\in \A$, set $T_i\leftarrow 0$ and $\muh_i\leftarrow 1$.
  \FOR{t=1,2,...}
  \STATE For every arm $i\in \A$, $\mub_{i} \leftarrow \min\{ \muh_{i} + \sqrt{\frac{3 \ln t}{2T_{i}}},1\}$
  \STATE Play the set $\A^{\pit} \leftarrow f_{\mub}^{\alpha,\beta}(\F^{\pit})$.
  \STATE Observe the realization ${X}_{i}$ of $\mathcal{X}_{i}$, for $i\in \A^{\pit}$. 
  \STATE Set $T_i \leftarrow  T_i+1$ and $\muh_i \leftarrow \muh_i\frac{T_i-1}{T_i} + \frac{X_i}{T_i}$, for $i\in \A^{\pit}$.
  \ENDFOR
\end{algorithmic}
\end{algorithm}

\paragraph{Regret analysis.}
We now provide $\rho(\alpha,\beta)$-regret guarantees for algorithm \textsc{cbbsd-ucb}, with $\rho = \rho(\alpha, \beta) = \frac{\alpha\beta}{1 + \alpha\beta}$. As opposed to the non-blocking combinatorial bandits setting, where the optimal solution in hindsight is to repeatedly play the maximum expected reward feasible subset of arms, the mean reward collected at each round by an optimal solution in the blocking setting might exhibit significant fluctuations over time due to blocking. Standard regret analysis typically uses the optimal solution's value as a baseline; but these fluctuations require a different idea.

Instead, we use as a baseline the expected reward collected by an optimal full-information algorithm that can play arms {\em fractionally} yet {\em consistently} over time. Indeed, the (optimal) expected pulling rates, $\{z_i\}_{i \in \A}$, as defined in \Cref{sec:fullinformation}, precisely characterize such a baseline. These rates can be used to mirror the analysis of \Cref{thm:full_information}, this time for \textsc{cbbsd-ucb}, keeping track of the extra loss due to the lack of information. 
In order to quantify the above loss, which is due to the fact that \textsc{cbbsd-ucb} calls the $(\alpha,\beta)$-oracle using the vector of UCB-indices $\mub_t$ at each time $t$, we use the notion of {\em instantaneous regret}:
\begin{definition}[Instantaneous Regret]
At every time $t \in [T]$, the {\em instantaneous regret} of $\pit$ is defined as:
\begin{equation*}
    \Gamma^{\pit}_{t} = \alpha \cdot \beta \cdot \mu(\OPT_{\mu}(\F^{\pit}_t)) - \mu(\A^{\pit}_t).
\end{equation*}
\end{definition}
Informally, $\Gamma^{\pit}_{t}$ measures the difference between an $\alpha \beta$-fraction of the expected reward of the optimal feasible subset of the available arms, and the expected reward collected by $\pit$ at time $t$.

Using the properties of the expected pulling rates in \Cref{claim:objective} and Lemmas \ref{lemma:stochastic} and \ref{lemma:hereditary}, and following \Cref{thm:full_information}, we provide the following upper bound to the regret:

\begin{restatable}{lemma}{gammaDecomp}\label{lemma:gamma}
The $\rho$-approximate regret of our algorithm, where $\rho = \frac{\alpha\cdot\beta}{1 + \alpha\cdot\beta}$ can be upper bounded as:
\begin{align*}
    &\rho\Reg^{\pit}(T) \leq \frac{1}{1 + \alpha\cdot \beta} \Ex{}{\sum_{t \in [T]} \Gamma^{\pit}_t} + \mathcal{O}(d_{\max}\cdot k). 
\end{align*}
\end{restatable}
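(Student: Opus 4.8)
The idea is to rerun the argument of \Cref{thm:full_information} with \textsc{cbbsd-ucb} in place of \textsc{greedy-heuristic}, and to charge the extra slack caused by calling the oracle with the UCB vector $\mub_t$ instead of the true mean vector $\mu$ to the instantaneous-regret terms $\Gamma^{\pit}_t$. Throughout, $\{z_i\}_{i\in\A}$ are the expected pulling rates of the \emph{optimal} online algorithm introduced in \Cref{sec:fullinformation}; in particular \Cref{claim:objective}, \Cref{lemma:stochastic} and \Cref{lemma:hereditary} apply to them verbatim, since their proofs invoke only \Cref{assumption:delays} and \Cref{assumption:independence}, not the identity of our algorithm.

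The first thing I would verify is that, although \textsc{cbbsd-ucb} is adaptive, the set $\A^{\pit}_t$ it plays at round $t$ is a function only of the history of rounds $1,\dots,t-1$ together with the oracle's internal coins at round $t$; hence it is independent of the round-$t$ reward realizations $\{X_{i,t}\}_{i\in\A}$ and, at every round $t'$, the event $\{i\in\A^{\pit}_{t'}\}$ is independent of the delay realization $D_{i,t'}$ (the UCB indices are built from \emph{past} observed rewards, and the blocked/available partition from \emph{past} delays). These are precisely the independence facts used in \Cref{thm:full_information}, so $\Ex{}{X_{i,t}\event{i\in\A^{\pit}_t}}=\mu_i\Pro{i\in\A^{\pit}_t}$ and therefore $\Rew^{\pit}(T)=\sum_{t\in[T]}\Ex{}{\mu(\A^{\pit}_t)}$.

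Next I would substitute $\mu(\A^{\pit}_t)=\alpha\beta\,\mu(\OPT_{\mu}(\F^{\pit}_t))-\Gamma^{\pit}_t$ (the definition of $\Gamma^{\pit}_t$), apply \Cref{lemma:hereditary} on support $\F^{\pit}_t$, and split $\F^{\pit}_t=\A\setminus\B^{\pit}_t$ via \Cref{claim:objective}, to get
\begin{align*}
\Rew^{\pit}(T)
&= \alpha\beta\sum_{t\in[T]}\Ex{}{\mu(\OPT_{\mu}(\F^{\pit}_t))}-\Ex{}{\sum_{t\in[T]}\Gamma^{\pit}_t}\\
&\geq \alpha\beta\sum_{t\in[T]}\Ex{}{\textstyle\sum_{i\in\F^{\pit}_t}\mu_i z_i}-\Ex{}{\sum_{t\in[T]}\Gamma^{\pit}_t}\\
&= \alpha\beta\,\Rew^{*}(T)-\alpha\beta\sum_{t\in[T]}\Ex{}{\textstyle\sum_{i\in\B^{\pit}_t}\mu_i z_i}-\Ex{}{\sum_{t\in[T]}\Gamma^{\pit}_t}.
\end{align*}
The blocked-mass term is then bounded exactly as in \eqref{inq:online:first}--\eqref{inq:online:second}: an arm is blocked at $t$ only if it was played at some $t'<t$ with $D_{i,t'}>t-t'$, so using the delay-independence above, $\sum_{t>t'}\Pro{D_i>t-t'}\le d_i$, and \Cref{lemma:stochastic}, one obtains $\sum_{t\in[T]}\Ex{}{\sum_{i\in\B^{\pit}_t}\mu_i z_i}\le\Rew^{\pit}(T)+\mathcal{O}(d_{\max}k)$. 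Plugging this in, rearranging into $(1+\alpha\beta)\Rew^{\pit}(T)\ge\alpha\beta\,\Rew^{*}(T)-\Ex{}{\sum_{t\in[T]}\Gamma^{\pit}_t}-\mathcal{O}(d_{\max}k)$, dividing by $1+\alpha\beta$, and recalling $\rho=\frac{\alpha\beta}{1+\alpha\beta}\le 1$ gives $\Reg^{\pit}_\rho(T)=\rho\Rew^{*}(T)-\Rew^{\pit}(T)\le\frac{1}{1+\alpha\beta}\Ex{}{\sum_{t\in[T]}\Gamma^{\pit}_t}+\mathcal{O}(d_{\max}k)$, which is the claimed bound (a fortiori at least as strong as the displayed inequality, since $\rho\le1$).

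\textbf{Where the work is.} Modulo the measurability/independence bookkeeping for the adaptive policy — the one genuinely new point, namely that the round-$t$ decision does not peek at the round-$t$ reward or delay — every step above is the computation already carried out for \Cref{thm:full_information}, now with $\Gamma^{\pit}_t$ acting as a placeholder for the loss of \textsc{cbbsd-ucb} against the $\alpha\beta$-discounted optimal available subset at round $t$. The genuinely delicate part of the analysis, namely showing that $\Ex{}{\sum_{t\in[T]}\Gamma^{\pit}_t}$ is itself only logarithmic in $T$ (where the UCB concentration and the dynamically changing support $\F^{\pit}_t$ enter), is deferred to the regret bound that follows this lemma and is not needed here.
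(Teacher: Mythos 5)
Your proposal is correct and follows essentially the same route as the paper's own proof: express $\Rew^{\pit}(T)$ as $\sum_t \Ex{}{\mu(\A^{\pit}_t)}$ via conditioning on the history, substitute the definition of $\Gamma^{\pit}_t$, apply \Cref{lemma:hereditary} and \Cref{claim:objective} to split off the blocked mass, bound that term by $\Rew^{\pit}(T)+\mathcal{O}(d_{\max}\cdot k)$ exactly as in \eqref{inq:online:second}, and rearrange. The independence bookkeeping you flag as the one new point is precisely what the paper handles by conditioning on $H^{\pit}_{t-1}$.
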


Thus in order to upper bound the $\rho$-regret, we need to control the instantaneous regret over time. While this task now resembles the regret analysis of standard (non-blocking) combinatorial bandits, our setting poses an additional technical challenge: the instantaneous regret $\Gamma^{\pit}_{t}$ depends on the history of arm pulling and reward/delay realizations, not only via the state of the UCB-indices at time $t$, but also through the set of available arms $\F^{\pit}_t$. As we show, any potential issue due to these additional correlations can be avoided by carefully defining the suboptimality gaps of our problem and adapting the techniques in \cite{WC17} in a way to capture the constantly changing availability state.

\begin{definition}[Bad Feasible Set]\label{def:bad_set}
We refer to a feasible set $S \in \I(\Sc)$ as {\em bad} w.r.t. a availability set $\Sc$, when $\mu(S) < \alpha \cdot \mu(\OPT_{\mu}(\Sc))$. Moreover, the family of bad feasible sets w.r.t. the availability set $\Sc$ is defined as: 
$$\mathcal{S}_B(\Sc)= \{S \in \I(\Sc) ~|~ \mu(S) < \alpha \cdot \mu(\OPT_{\mu}(\Sc))\}.$$
Finally, we denote by $\mathcal{S}_{i,B}(\Sc)$ the family of feasible sets in $\mathcal{S}_B(\Sc)$ that contain arm $i \in \A$.
\end{definition}

As opposed to \cite{CWY13, WC17}, the following notion of suboptimality gap is now a function of both the availability set $\F$, which depends on the choices of the algorithm, and the played set $S \in \I(\F)$. 

\begin{definition}[Suboptimality Gap]\label{def:differences}
The {\em suboptimality gap} of a bad feasible set $S \in \I(\F)$ w.r.t. an availability set $\Sc$ is defined as: 
$$\Delta(S,\Sc)= \alpha \cdot \mu(\OPT_{\mu}(\Sc)) - \mu(S).$$
\end{definition}

Focusing on the bad feasible sets that contain arm $i$, we can further define the minimum possible gap of any such set as:
$\Delta^{i}_{min} = \min_{\Sc \subseteq \A,~ S\in S_{i,B}(\Sc)}\Delta(S,\Sc).$
Further, we can define maximum gap of any bad feasible set that can be played as:
$\Delta_{max} = \max_{\Sc \subseteq \A,~i\in \A,~S\in S_{i,B}(\Sc)}\Delta(S,\Sc)$. We remark that the above choice of $\Delta^{i}_{min}$ captures the following crucial aspect of our problem: the algorithm needs to be able to distinguish between optimal and suboptimal choices over any possible availability set $\F$ in the worst case.

For the rest of our analysis, we adapt the ideas introduced in \cite{WC17} to accommodate stochastic blocking. The key technical challenge we circumvent stems from the dynamic (un)availability of the arms, which impacts our ability to obtain accurate estimates of $\mu$. This becomes important in our proofs of Lemma \ref{lemma:kappas} and Theorem \ref{theorem:regret}. 

\begin{definition}[Nice sampling]
At each the beginning of each round $t$, we say that \textsc{cbbsd-ucb} has a {\em nice sampling} if 
$|\muh_{i,t-1}-\mu_i|\leq \sqrt{\frac{3\ln{(t)}}{2T_{i,t-1}}}$ for each arm $i \in \A$.
\end{definition}
Let $\mathcal{N}_t$ denote the event that the algorithm has a nice sampling at time $t$. We can bound the probability of the event $\mathcal{N}_t$ as follows:

\begin{restatable}{lemma}{niceRun} \label{lemma:nice_run}
The event $\mathcal{N}_t$ holds for \textsc{cbbsd-ucb} at round $t$ with probability at least $1-\frac{2k}{t^2}$.
\end{restatable}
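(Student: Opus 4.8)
\emph{Proof plan.} The plan is a standard concentration-plus-union-bound argument; the only subtlety is that the number of samples $T_{i,t-1}$ underlying the empirical mean $\muh_{i,t-1}$ is itself random and correlated with the realized rewards through the algorithm's (blocking-dependent) choices. To sidestep this, I would work on the ``tape'' probability space in which, for every arm $i\in\A$ and every index $s\ge 1$, the $s$-th reward sample of arm $i$ is an a priori fixed $[0,1]$-valued random variable drawn i.i.d.\ from $\mathcal{X}_i$, and let $\muh_i^{(s)}$ denote the average of the first $s$ such samples. The point is that the blocking dynamics only determine \emph{when} (and whether) arm $i$ is played, hence only which prefix of its sample tape has been revealed by round $t$; they do not alter the distribution of the tape itself.

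First, I would apply Hoeffding's inequality to each fixed-length average: since rewards lie in $[0,1]$, for every $i\in\A$ and every $s\in\{1,\dots,t-1\}$,
\[
\Pro{\,\bigl|\muh_i^{(s)} - \mu_i\bigr| > \sqrt{\tfrac{3\ln t}{2 s}}\,} \;\le\; 2\exp\!\bigl(-2 s \cdot \tfrac{3\ln t}{2 s}\bigr) \;=\; \frac{2}{t^{3}}.
\]
Next I would union bound over the at most $t-1$ possible sample counts and then over the $k$ arms. Observe that at the start of round $t$ we have $T_{i,t-1}\in\{0,1,\dots,t-1\}$; when $T_{i,t-1}=0$ the threshold $\sqrt{3\ln t/(2T_{i,t-1})}$ is infinite, so the nice-sampling inequality for arm $i$ holds vacuously, while when $T_{i,t-1}=s\ge 1$ we have $\muh_{i,t-1}=\muh_i^{(s)}$. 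Hence $\{\,|\muh_{i,t-1}-\mu_i| > \sqrt{3\ln t/(2T_{i,t-1})}\,\}\subseteq\bigcup_{s=1}^{t-1}\{\,|\muh_i^{(s)}-\mu_i|>\sqrt{3\ln t/(2s)}\,\}$, and therefore
\[
\Pro{\neg\mathcal{N}_t} \;\le\; \sum_{i\in\A}\sum_{s=1}^{t-1}\Pro{\,\bigl|\muh_i^{(s)}-\mu_i\bigr| > \sqrt{\tfrac{3\ln t}{2s}}\,} \;\le\; k\,(t-1)\,\frac{2}{t^{3}} \;\le\; \frac{2k}{t^{2}},
\]
which gives $\Pro{\mathcal{N}_t}\ge 1-\frac{2k}{t^2}$, as claimed.

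I expect the only genuine obstacle to be the bookkeeping of the first paragraph: making precise that one may replace the random-length average $\muh_{i,t-1}$ by the family $\{\muh_i^{(s)}\}_{s\ge 1}$ of fixed-length i.i.d.\ averages \emph{before} invoking Hoeffding, and in particular that this replacement is unaffected by the stochastic delays (the delays only govern the revelation schedule of the reward tapes, not their law, and blocking never causes an arm to be ``resampled''). This is exactly the standard device used in UCB analyses, e.g.\ in \cite{WC17}; once it is in place, the remaining steps are the routine Hoeffding and union-bound estimates displayed above.
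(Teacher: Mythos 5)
Your proof is correct and follows essentially the same route as the paper's: Hoeffding's inequality applied to each fixed sample count $s\in\{1,\dots,t-1\}$, a union bound over $s$ to handle the random $T_{i,t-1}$, and a final union bound over the $k$ arms. Your explicit ``sample tape'' formalization makes rigorous a step the paper's displayed inequality states only loosely, but the argument is the same one (attributed there to Chen et al.).
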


Intuitively, the event $\mathcal{N}_t$ implies that the UCB-indices of the arms at time $t$ are a good approximation of the actual mean rewards. At each round $t$, there are three reasons why our algorithm might play a suboptimal set of arms: (i) Failure of the oracle, denoted by $\neg \mathcal{Q}_t$, (ii) Non-representative collection of samples, denoted by $\neg \mathcal{N}_t$, and (iii) Insufficient number of samples for distinguishing the gaps. The probabilities of $\neg \mathcal{Q}_t$ and $\neg \mathcal{N}_t$ at each round $t$ can be upper bounded by $1-\beta$, using the definition of the oracle, and by $\frac{2k}{t^2}$, using \Cref{lemma:nice_run}, respectively. Thus, we focus on the rounds where $\mathcal{Q}_t$ and $\mathcal{N}_t$ hold. 

\begin{definition}[Sampling Threshold]
We define the {\em sampling threshold} as:
\begin{equation*}
    \ell_T(\Delta)= \frac{24r^2\ln(T)}{\Delta^2},
\end{equation*}
\end{definition}
where $r = \max_{S \in \I}\{|S|\}$.
We also define the following function:
\begin{align*}
\kappa_T(\Delta, s) = 
\begin{cases}
       2, &\quad\text{if }s = 0,\\
       \sqrt{\frac{24 \ln(T)}{s}}, &\quad\text{if } 1 \leq s \leq \ell_{T}(\Delta),\\
       0, &\quad\text{if } s > \ell_{T}(\Delta).\\
    \end{cases}
\end{align*}
In the above definitions, which appear in the framework of \cite{WC17}, the domain of $\Delta$ is over any possible suboptimality gap given by a combination of availability set $\F$ and feasible set $S \in \I(\F)$. Specifically, the sampling threshold $\ell_T$ has the following property: When the number of times an arm $i$ has been played exceeds $\ell_T(\Delta_{\min}^i)$, then the bad feasible sets containing this arm cannot further contribute to the regret (assuming that $\mathcal{Q}_t$ and $\mathcal{N}_t$ hold). This idea is depicted in the following result:

\begin{restatable}{lemma} {sumOfKappa}\label{lemma:kappas}
For any $t\in [T]$, if the event $\{\mathcal{Q}_t, \mathcal{N}_t, \A^{\pit}_t\in \mathcal{S}_B(\F^{\pit}_t)\}$ holds, then 
$$\Delta(\A^{\pit}_t,\F^{\pit}_t)\leq \sum_{i\in\A^{\pit}_t} \kappa_T(\Delta_{\min}^i,T_{i,t-1}).$$
\end{restatable}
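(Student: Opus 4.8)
The plan is to adapt the CUCB analysis of \cite{WC17} to our blocking setting, the one genuine difference being that the availability set $\F^{\pit}_t$ is itself a random quantity depending on the algorithm's past choices. Throughout I condition on the event $\{\mathcal{Q}_t,\mathcal{N}_t,\A^{\pit}_t\in\mathcal{S}_B(\F^{\pit}_t)\}$ and abbreviate $\F=\F^{\pit}_t$, $\A_t=\A^{\pit}_t$, $\rho_{i,t}=\sqrt{\frac{3\ln t}{2T_{i,t-1}}}$, and $\delta_i=\mub_{i,t}-\mu_i$.

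\emph{Step 1 (an optimistic, tight raw bound).} On $\mathcal{N}_t$, every arm $i$ with $T_{i,t-1}\ge 1$ satisfies $|\muh_{i,t-1}-\mu_i|\le\rho_{i,t}$, so $\mu_i\le\mub_{i,t}=\min\{\muh_{i,t-1}+\rho_{i,t},1\}\le\mu_i+2\rho_{i,t}$ (the lower bound also using $\mu_i\le 1$), while for $T_{i,t-1}=0$ we have $\mub_{i,t}=1\ge\mu_i$ and $\delta_i\le 1$. In particular $\mub_t\ge\mu$ coordinate-wise, hence $\mu(\OPT_\mu(\F))=\max_{S\in\I(\F)}\mu(S)\le\max_{S\in\I(\F)}\mub_t(S)=\mub_t(\OPT_{\mub_t}(\F))$. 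Since $\mathcal{Q}_t$ holds, the oracle returns an $\alpha$-approximation with respect to $\mub_t$, i.e.\ $\mub_t(\A_t)\ge\alpha\,\mub_t(\OPT_{\mub_t}(\F))$. Chaining these,
\[
\alpha\,\mu(\OPT_\mu(\F))\le\alpha\,\mub_t(\OPT_{\mub_t}(\F))\le\mub_t(\A_t)=\sum_{i\in\A_t}\mub_{i,t}=\mu(\A_t)+\sum_{i\in\A_t}\delta_i,
\]
so $\Delta(\A_t,\F)\le\sum_{i\in\A_t}\delta_i$, where $\delta_i\le 2\rho_{i,t}$ when $T_{i,t-1}\ge 1$ and $\delta_i\le 1$ always.

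\emph{Step 2 (killing the well-sampled arms).} Because $\A_t\in\mathcal{S}_B(\F)$ is a bad set that contains each of its own arms, $\A_t\in\mathcal{S}_{i,B}(\F)$ for every $i\in\A_t$, hence $\Delta(\A_t,\F)\ge\Delta_{\min}^i$ by the definition of $\Delta_{\min}^i$ as a minimum over \emph{all} availability sets — this worst-case definition is exactly what makes the history-dependence of $\F$ harmless here. Partition $\A_t=G\sqcup B$ with $B=\{i\in\A_t:T_{i,t-1}>\ell_T(\Delta_{\min}^i)\}$. For $i\in B$, plugging $T_{i,t-1}>\ell_T(\Delta_{\min}^i)=\frac{24r^2\ln T}{(\Delta_{\min}^i)^2}$ into $2\rho_{i,t}=\sqrt{\frac{6\ln t}{T_{i,t-1}}}\le\sqrt{\frac{6\ln T}{T_{i,t-1}}}$ gives $\delta_i\le 2\rho_{i,t}<\frac{\Delta_{\min}^i}{2r}\le\frac{\Delta(\A_t,\F)}{2r}$; summing and using $|B|\le|\A_t|\le r$ yields $\sum_{i\in B}\delta_i\le\frac12\Delta(\A_t,\F)$. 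Subtracting this from the bound of Step 1 gives $\frac12\Delta(\A_t,\F)\le\sum_{i\in G}\delta_i$, i.e.\ $\Delta(\A_t,\F)\le 2\sum_{i\in G}\delta_i$.

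\emph{Step 3 (matching $\kappa_T$).} For $i\in G$ with $1\le T_{i,t-1}\le\ell_T(\Delta_{\min}^i)$ we have $2\delta_i\le 4\rho_{i,t}=\sqrt{\frac{24\ln t}{T_{i,t-1}}}\le\sqrt{\frac{24\ln T}{T_{i,t-1}}}=\kappa_T(\Delta_{\min}^i,T_{i,t-1})$, and for $i\in G$ with $T_{i,t-1}=0$ we have $2\delta_i\le 2=\kappa_T(\Delta_{\min}^i,0)$. Since $\kappa_T\ge 0$, extending the sum from $G$ to $\A_t$ only enlarges the right-hand side, giving $\Delta(\A_t,\F)\le\sum_{i\in\A_t}\kappa_T(\Delta_{\min}^i,T_{i,t-1})$, as claimed.

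The routine part is Step 1, which needs only \Cref{lemma:nice_run} and the definition of the $(\alpha,\beta)$-oracle. The main obstacle is Step 2: one must recognize that defining $\Delta_{\min}^i$ as a worst-case minimum over all availability sets decouples the bound from the particular, algorithm-dependent $\F^{\pit}_t$ that is realized, and then run the ``under-sampled arms vs.\ over-sampled arms'' averaging with the cardinality bound $|\A_t|\le r$ — which is precisely what fixes the constant $24r^2$ inside $\ell_T$ and the $\sqrt{24\ln T/s}$ shape of $\kappa_T$.
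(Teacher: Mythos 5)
Your proof is correct and follows essentially the same route as the paper's: the chain $\mub(\A^{\pit}_t)\geq\alpha\,\mub(\OPT_{\mub}(\F^{\pit}_t))\geq\alpha\,\mu(\OPT_{\mu}(\F^{\pit}_t))$ on $\mathcal{Q}_t\cap\mathcal{N}_t$, the crucial use of the worst-case definition of $\Delta_{\min}^i$ (via $\Delta(\A^{\pit}_t,\F^{\pit}_t)\geq\Delta_{\min}^i$ for $i\in\A^{\pit}_t$) to decouple from the realized availability set, the cardinality bound $|\A^{\pit}_t|\leq r$, and the threshold case analysis against $\ell_T$. Your explicit partition into under- and over-sampled arms with the ``absorb half of $\Delta$'' step is just a presentational rearrangement of the paper's ``subtract $\Delta_{\min}^i/(2r)$ from every term'' bound, yielding the same constants and conclusion.
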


By combining \Cref{lemma:gamma,lemma:nice_run,lemma:kappas} with the techniques of \cite{WC17}, we obtain the following regret bounds:

\begin{restatable}{theorem}{thmRegret} \label{theorem:regret}
For the $\frac{\alpha\cdot\beta}{1 + \alpha\cdot\beta}$-approximate regret of our algorithm, we provide the following {\em distribution-dependent} bound:
\begin{align*}
    \frac{48}{1 + \alpha\beta}\sum_{i\in \A}\frac{r\cdot \ln{T}}{\Delta_{\min}^i} + k \cdot (2+\frac{\pi^2}{3} \Delta_{\max})  + \mathcal{O}(d_{\max}\cdot k),
\end{align*}
and the following {\em distribution-independent} bound:
\begin{align*}
    \frac{14\sqrt{k\cdot r\cdot T\ln{T}}}{1 + \alpha \beta} + k \cdot (2+ \frac{\pi^2 }{3} \Delta_{\max})  + \mathcal{O}(d_{\max}\cdot k),
\end{align*}
where $r = \max_{S \in \I}\{|S|\}$.
\end{restatable}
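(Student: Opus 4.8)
\emph{Proof plan.} By \Cref{lemma:gamma} it suffices to upper bound $\Ex{}{\sum_{t\in[T]}\Gamma^{\pit}_t}$, since this immediately yields the $\rho$-approximate regret bound up to the additive $\mathcal{O}(d_{\max}k)$ term already carried by that lemma. I would first split each round $t$ according to the nice-sampling event $\mathcal{N}_t$. On $\neg\mathcal{N}_t$ one has $\Gamma^{\pit}_t\le\Delta_{\max}$ (a feasible set whose expected reward falls below the $\alpha\beta$-threshold is in particular a bad set in the sense of \Cref{def:bad_set}, so its gap is at most $\Delta_{\max}$), and by \Cref{lemma:nice_run} these rounds contribute in expectation at most $\Delta_{\max}\sum_{t\ge1}\tfrac{2k}{t^2}=\tfrac{\pi^2}{3}k\,\Delta_{\max}$.

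For the rounds with $\mathcal{N}_t$, I would condition on the history $\mathcal{H}_t$ at the start of round $t$ --- which fixes $\F^{\pit}_t$, the indices $\mub_t$, the counts $T_{i,t-1}$ and the occurrence of $\mathcal{N}_t$ --- and average only over the oracle's internal coins. The heart of the argument is a lower bound on $\Ex{\text{oracle}}{\mu(\A^{\pit}_t)\mid\mathcal{H}_t}$. With probability at least $\beta$ the oracle succeeds ($\mathcal{Q}_t$), and whenever $\mathcal{Q}_t\cap\mathcal{N}_t$ holds I claim $\mu(\A^{\pit}_t)\ge\alpha\,\mu(\OPT_\mu(\F^{\pit}_t))-\sum_{i\in\A^{\pit}_t}\kappa_T(\Delta_{\min}^i,T_{i,t-1})$: if the played set is bad this is exactly \Cref{lemma:kappas} rewritten, and if it is good the inequality is immediate since the $\kappa_T$-terms are non-negative and $\mu(\A^{\pit}_t)\ge\alpha\,\mu(\OPT_\mu(\F^{\pit}_t))$ by definition. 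Discarding the non-negative contribution of $\neg\mathcal{Q}_t$ and using $\Pro{\mathcal{Q}_t}\ge\beta$ together with $\mu(\OPT_\mu(\F^{\pit}_t))\ge0$, this gives, on $\mathcal{N}_t$, $\Ex{\text{oracle}}{\mu(\A^{\pit}_t)\mid\mathcal{H}_t}\ge\alpha\beta\,\mu(\OPT_\mu(\F^{\pit}_t))-\Ex{\text{oracle}}{\sum_{i\in\A^{\pit}_t}\kappa_T(\Delta_{\min}^i,T_{i,t-1})\mid\mathcal{H}_t}$, hence $\Ex{}{\Gamma^{\pit}_t\event{\mathcal{N}_t}}\le\Ex{}{\sum_{i\in\A^{\pit}_t}\kappa_T(\Delta_{\min}^i,T_{i,t-1})}$.

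It remains to sum $\Ex{}{\sum_{t}\sum_{i\in\A^{\pit}_t}\kappa_T(\Delta_{\min}^i,T_{i,t-1})}$ and reorganize it per arm: over the rounds in which arm $i$ is played, its contribution is $\kappa_T(\Delta_{\min}^i,0)+\sum_{s\ge1}\kappa_T(\Delta_{\min}^i,s)$, with the summand vanishing once $s>\ell_T(\Delta_{\min}^i)$. For the distribution-dependent bound I would use $\kappa_T(\Delta_{\min}^i,0)=2$ and $\sum_{s=1}^{L}s^{-1/2}\le2\sqrt{L}$ with $L=\ell_T(\Delta_{\min}^i)=24r^2\ln T/(\Delta_{\min}^i)^2$, obtaining at most $2+48\,r\ln T/\Delta_{\min}^i$ per arm; summing over $i$ and plugging into \Cref{lemma:gamma} (pulling the $2k$ and $\tfrac{\pi^2}{3}k\Delta_{\max}$ terms outside the factor $\tfrac1{1+\alpha\beta}\le1$) gives the stated bound. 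For the distribution-independent bound I would not truncate at $\ell_T$ but instead bound $\sum_{s=1}^{T_{i,T}-1}\kappa_T(\Delta_{\min}^i,s)\le\sqrt{24\ln T}\sum_{s=1}^{T_{i,T}}s^{-1/2}\le2\sqrt{24\,T_{i,T}\ln T}$, and then apply Cauchy--Schwarz over the arms with $\sum_i T_{i,T}=\sum_t|\A^{\pit}_t|\le rT$ to get $\sum_i\sqrt{T_{i,T}}\le\sqrt{krT}$, which produces the $\Theta(\sqrt{krT\ln T})$ leading term.

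I expect the conditional-expectation step on $\mathcal{N}_t$ to be the main obstacle: the oracle-failure event $\neg\mathcal{Q}_t$ must be absorbed \emph{without} a term linear in $T$, which a naive case split into $\mathcal{Q}_t$/$\neg\mathcal{Q}_t$ would incur (charging $(1-\beta)\Delta_{\max}$ per round). The key is never to account for $\neg\mathcal{Q}_t$ separately, but to lower bound $\Ex{\text{oracle}}{\mu(\A^{\pit}_t)\mid\mathcal{H}_t}$ directly against the $\alpha\beta$-scaled optimum minus the $\kappa_T$ correction --- legitimate precisely because the $\kappa_T$ inequality also holds (trivially) on rounds where the played set is good, so the estimate is uniform over the oracle's randomness. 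A related subtlety is that the realized set $\A^{\pit}_t$ appears inside the oracle expectation within the $\kappa_T$ sum and must be kept there (rather than relaxed to a sum over all available arms), so that the per-arm reorganization and the threshold $\ell_T$ can be exploited.
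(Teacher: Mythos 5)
Your proposal is correct and follows the same skeleton as the paper's proof (reduction via \Cref{lemma:gamma}, the nice-sampling event, \Cref{lemma:kappas}, and per-arm reorganization of the $\kappa_T$ sums), but it deviates in two places worth noting. First, for absorbing oracle failures: the paper rewrites $\Gamma^{\pit}_t$ as $\bigl(\alpha\,\mu(\OPT_{\mu}(\F^{\pit}_t))-\mu(\A^{\pit}_t)\bigr)+\alpha(\beta-1)\mu(\OPT_{\mu}(\F^{\pit}_t))$, splits the indicator three ways over $\{\mathcal{Q}_t,\mathcal{N}_t\}$, $\neg\mathcal{Q}_t$, $\neg\mathcal{N}_t$, and shows the $\neg\mathcal{Q}_t$ block is nonpositive because $(1-\beta)\Delta_{\max}(\F^{\pit}_t)\leq(1-\beta)\,\alpha\,\mu(\OPT_{\mu}(\F^{\pit}_t))$ cancels against the negative term; your conditional-expectation argument over the oracle's coins, using that the $\kappa_T$ inequality holds uniformly on $\mathcal{Q}_t\cap\mathcal{N}_t$ (trivially for good sets, via \Cref{lemma:kappas} for bad ones), is the same cancellation expressed more directly and is valid -- you correctly identified that a naive $(1-\beta)\Delta_{\max}$ charge per round must be avoided. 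Second, for the distribution-independent bound the paper introduces the threshold $M=\sqrt{48kr\ln T/T}$, splits on $\Delta(\A^{\pit}_t,\F^{\pit}_t)\gtrless M$, and invokes a separate \Cref{lemma:kappas2}; your route -- dropping the truncation at $\ell_T$, bounding each arm's contribution by $2+2\sqrt{24\,T_{i,T}\ln T}$, and applying Cauchy--Schwarz with $\sum_{i}T_{i,T}\leq rT$ -- is genuinely different, avoids the extra lemma, and in fact yields the slightly better leading constant $2\sqrt{24}\approx 9.8$ in place of $14$, so it implies the stated bound. Both of your deviations are sound; the only caveat is to state explicitly that $\Pro{\mathcal{Q}_t\,|\,\mathcal{H}_t}\geq\beta$ holds conditionally on the history (which it does, since the oracle's guarantee is per-call and its coins are independent of $\mathcal{H}_t$), as this is what licenses discarding the $\neg\mathcal{Q}_t$ contribution inside the conditional expectation.
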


Note that except for the additive $\mathcal{O}(d_{\max} \cdot k)$ term, the above regret bounds match the theoretical lower bounds for combinatorial bandits setting presented in \cite{KWAS15a} in the absence of delays. Indeed, given that our problem strictly generalizes the setting of standard combinatorial bandits with linear rewards and without blocking constraints (in which case our regret is exact and the $\mathcal{O}(d_{\max} \cdot k)$ term disappears), our dependence on $k,r,T$ and $\Delta_{\min}$ is unimprovable.

\section*{Acknowledgements}

This research was partially supported by NSF Grant 1826320, ARO grant W911NF-17-1-0359, and the Wireless Networking and Communications Group Industrial Affiliates Program.

\bibliography{bibliography}
\bibliographystyle{icml2021}

\newpage
\onecolumn
\appendix

\section{Computational and Unconditional Hardness}

\hardnessDelays*
\begin{proof}

We consider the simple instance of a single arm, denoted by $i$, which is always feasible to play (if available) and an infinite time horizon. The arm has a deterministic reward equal to $1$, while its delay distribution can be described as:
\begin{align*}
D_{i} = 
      \begin{cases}
        d, & \text{w.p. } p=\frac{1}{2} \\
        1, & \text{w.p. } 1-p,
      \end{cases}
\end{align*}
where $d > 1$ is an integer. Note that in the above instance, we have $d_{\max} = d$.

In the above setting, assuming that the player does not have a priori knowledge of the delay realizations, we consider an online policy, $alg(q)$ defined as follows: Whenever arm $i$ is available, it is played with probability $q$. It is not hard to see that the {\em parameterized} policy $alg(q)$, captures any optimal policy of the player in the above asymptotic scenario. For any $q$, such a policy can be represented using a Markov Chain (MC) (c.f. \cite{P94}) with $d$ states: $0,1, \dots ,d-1$. Each of these states indicates the number of subsequent rounds that remain until $i$ becomes available to play. 
At any round $t$, if the current state is $0$ (i.e., the arm is available), the chain transitions from $0$ to $d-1$ with probability $p\cdot q$. This corresponds to the case where the player chooses to play arm $i$ at time $t$ (an event that happens with probability $q$) and the delay realization at $t$ is $D_{i}=d$ (which happens independently with probability $p$). Alternatively, the chain transitions from state $0$ to itself with probability $(1-q)+q(1-p)=1-p\cdot q$, which corresponds to the case where the player either does not play $i$ at time $t$, or plays $i$ at $t$ but the delay realization is $1$. Finally, the MC transitions from any state $r\geq 1$ to state $r-1$ deterministically, as indicated in \Cref{fig:MC}.

\begin{figure}[ht]
\vskip 0.2in
\begin{center}
\centerline{\includegraphics[width=0.45\columnwidth]{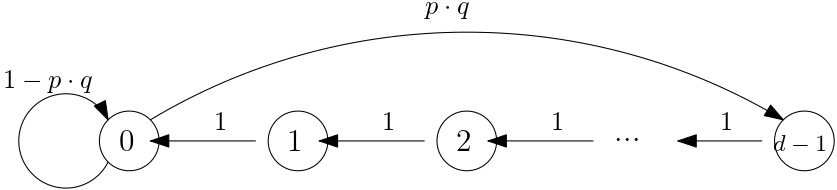}}
\caption{State transitions of policy $alg(q)$.}
\label{fig:MC}
\end{center}
\vskip -0.2in
\end{figure}

We assume that $alg(q)$ starts from state $r\in \{0,1,...,d-1\}$ with probability $\pi(r)$. The stationary distribution $\pi(r)$ corresponds to the solution of the following system equations: $\pi(1)=...=\pi(d-1)=p\cdot q \cdot\pi(0)$ and $\sum_{r=0}^{d-1}\pi(r)=1$. From this system it follows that: $\pi(0)=\frac{1}{1+p\cdot q\cdot (d-1)}$. Due to stationarity, for any time horizon $T>0$, the expected average reward of $alg(q)$ is:
\begin{align*}
    \Rew_{avg}^{alg(q)} = \Ex{}{\frac{1}{T}\sum_{t=1}^{T}\event{\A_t^{alg(q)}=i}} = q\cdot \pi(0) = \frac{q}{1+p\cdot q\cdot (d-1)}.
\end{align*}

Under the assumption that the delay realizations are not known a priori, as $T$ goes to infinity, an optimal online algorithm can be represented using the above MC (c.f. \cite{P94}). To identify this policy we can maximize the expected average reward $\Rew_{avg}^{alg(q)}$ w.r.t. $q \in [0,1]$. The quantity $\Rew_{avg}^{alg(q)}$ is maximized for $q=1$. This verifies the intuition that, if the player is not a priori aware of the realizations of the delay, arm $i$ should be played every time it is available. For $q=1$, the expected average reward collected by $alg(q)$ is: $\Rew_{avg}^{alg(1)}=  \frac{1}{1+p\cdot (d-1)}$. 

We now turn our focus to algorithms that are a priori aware of all the delay realizations. Specifically, we consider the following (possibly sub-optimal) policy: At any time $t$, after observing the realization of $D_{i,t}$, play arm $i$ if and only if $D_{i,t}=1$. The expected average reward of this policy within any time horizon $T$, with $T>0$, is: 
\begin{align*}
    \Rew_{avg}^{cl} = \Ex{}{\frac{1}{T}\sum_{t=1}^{T}\event{\A_t^{cl}=i}} = \Ex{}{\frac{1}{T}\sum_{t=1}^{T}\event{D_{i,t}=1}} = 1-p.
\end{align*}

Thus, for the ratio of the expected average rewards of these two policies, we have that:
\begin{align*}
    \frac{\Rew_{avg}^{alg(1)}}{\Rew_{avg}^{cl}} = \frac{1}{(1-p)(1+p\cdot (d-1))} = \frac{4}{d+1},
\end{align*}
which concludes our proof.
\end{proof}

\hardnessPoly*
\begin{proof}
Suppose there is a polynomial $(\frac{k}{2})^{-\frac{1}{2}+\epsilon}$-approximation algorithm for the \CBBSD~problem, where $k$ is the number of arms. We consider an instance of the {\em Edge-Disjoint Path} (EDP) problem on a {\em directed} graph $\mathcal{G} = (V,E)$ with $|E| = k'$ and a set of $m$ pairs of vertices $\mathcal{T} = \{(s_i, t_i) | s_i,t_i \in V, i \in [m]\}$. By \Cref{thm:edp:gap_problem}, it is NP-hard to distinguish whether all the $m$ pairs or at most $\frac{m}{k'^{1/2 - \epsilon'}}$ pairs of the above instance can be connected by edge-disjoint paths, for any $\epsilon' > 0$. Our goal is to show that our $(\frac{k}{2})^{-\frac{1}{2}+\epsilon}$ approximation algorithm can be used to distinguish between these two cases. 

First, we slightly transform the graph $\mathcal{G}$: We construct a directed graph  $\mathcal{G}'=(V',E')$ such that $V' = V \cup \{s'_1, s'_2, \dots, s'_m\}$ and 
$E' = E \cup \big\{(s'_1, s_1), (s'_2, s_2), \dots, (s'_m, s_m) \big\}$, as indicated in Figure \ref{fig:EDP}.

\begin{figure}[ht]
\vskip 0.2in
\begin{center}
\centerline{\includegraphics[width=0.40\columnwidth]{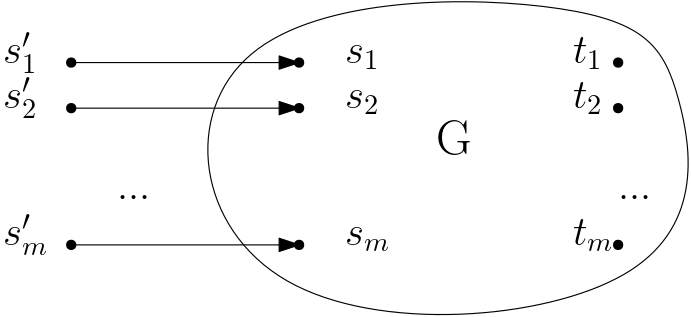}}
\caption{Construction of $G'$ from $G$.}
\label{fig:EDP}
\end{center}
\vskip -0.2in
\end{figure}

We now consider an instance of the \CBBSD~problem where:

\begin{itemize}
    \item Every directed edge of $\mathcal{G}'$ corresponds to an arm, that is, for the set of arms $\A$ we have that $\A = E'$. The number of arms is denoted by $k$, thus $k=|A|=|E'|= k' + m$.
    \item The rewards of all arms are deterministically equal to $0$, except for the arms that correspond to edges $(s'_i, s_i)$ for $i \in [m]$, which have a deterministic reward equal to $1$. Moreover, all arms have deterministic delays equal to $m$.
    \item The feasible sets of arms correspond to edges that form a directed ($s'_i-t_i$) path for a unique $i\in[m]$. That is, 
    $\I = \{S \subseteq \A~|~\exists i \in [m] \text{ unique s.t. }S\text{ forms a directed path from $s'_i$ to $t_i$}\}.$ 
    \item The time horizon is a multiple of $m$, i.e. $T=c\cdot m$ for some (polynomially bounded) $c\in \mathbb{N}_{\geq 1}$.
\end{itemize}

\begin{claim}
We make the following observations on the above instance:
\begin{enumerate}
    \item The family of feasible sets $\I$ does not satisfy the hereditary property. 
    \item Given a subset $\F$ of available arms, a maximum reward available feasible set in $\I(\F)$ can be computed in polynomial-time.
    \item Any feasible arm-pulling schedule can be transformed in polynomial-time into a periodic schedule of period $m$ with at least the same reward. 
\end{enumerate}
\end{claim}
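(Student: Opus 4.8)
The plan is to verify the three items by unwinding the definition of $\I$ for the instance constructed above and by exploiting the fact that every edge/arm has delay exactly $m$.

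\emph{Part 1 (failure of the hereditary property).} I would exhibit an explicit violation. Pick any index $i \in [m]$ whose pair is connectable in $\mathcal{G}'$ (such an $i$ exists for any non-trivial EDP instance) and let $P$ be an $s'_i$--$t_i$ path, so $P \in \I$. Removing one interior edge of $P$ (or, if $P$ has length one, the single edge $(s'_i,s_i)$) yields a proper subset $P' \subsetneq P$ that no longer forms a directed $s'_j$--$t_j$ path for any $j$: it is either the empty set, or a disconnected/relocated fragment whose initial vertex lies in $V$, whereas every feasible set must emanate from some $s'_j \in V'\setminus V$. Hence $P'\notin\I$ although $P'\subsetneq P\in\I$, so $\I$ is not an independence system.

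\emph{Part 2 (polynomial-time maximization).} The key observation is that every non-empty feasible set has reward exactly $1$: an $s'_i$--$t_i$ path must use the unique edge leaving $s'_i$, namely $(s'_i,s_i)$, which carries reward $1$, and it cannot contain any other reward-$1$ edge $(s'_j,s_j)$ with $j\neq i$, since $s'_j$ has no incoming edges and is not the path's endpoint. Therefore the maximum-reward feasible set in $\I(\F)$ has value $1$ if $\I(\F)\neq\emptyset$ and $0$ otherwise. To compute it, run a reachability search (BFS/DFS) from $s'_i$ to $t_i$ in the subgraph $(V',\F)$ for each $i\in[m]$; any returned simple path is feasible (its source vertex determines $i$ uniquely) and attains the optimum $1$. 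This runs in time polynomial in $k$ and $m$.

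\emph{Part 3 (reduction to a periodic schedule).} I would use a pigeonhole argument over the $c$ consecutive length-$m$ blocks $B_k = \{(k-1)m+1,\dots,km\}$, $k\in[c]$, of the horizon $T=cm$. First note that within any window of $m$ consecutive rounds no arm is pulled twice, since every delay equals $m$; in particular, in a single block the sets $P_1,\dots,P_m$ played at its $m$ positions are pairwise edge-disjoint feasible sets. Consequently, repeating the pattern $(P_1,\dots,P_m)$ of any fixed block $B_{k^{*}}$ period after period yields a feasible schedule: at the $j$-th round of any period the only prior pull of an arm of $P_j$ occurred exactly $m$ rounds earlier (same position in the previous period), hence the arm is available again; in the first period everything is available. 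Choosing $k^{*}$ to be the block of maximum total reward, the repeated schedule collects at least $c$ times that reward, which is at least the original total $\sum_{k\in[c]}(\text{reward collected during }B_k)$. Producing this schedule from the original only requires scanning the $T=cm$ rounds, so it is polynomial.

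\emph{Main obstacle.} The delicate point is the feasibility check in Part 3: one must ensure that carrying a single block's pattern across period boundaries never conflicts with the blocking constraints, even though in the original schedule distinct blocks interact across their boundaries. This is precisely where the choice of uniform delay equal to the period $m$ is used---an arm played at position $j$ is freed exactly at the start of position $j$ in the next period---so the extracted pattern is self-consistent under repetition regardless of the original schedule's global behavior. The other two parts are then routine given the explicit structure of the constructed instance.
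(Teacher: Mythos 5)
Your proposal is correct and follows essentially the same route as the paper's proof: the same explicit violation of the hereditary property, the same observation that every feasible set has reward exactly $1$ so a DFS/BFS per source $s'_i$ suffices, and the same pigeonhole-plus-repetition argument for periodicity, where the uniform delay $m$ guarantees that the extracted block's pattern remains feasible across period boundaries. The only cosmetic difference is that you pigeonhole over the $c$ aligned blocks while the paper scans all length-$m$ windows; both yield the same conclusion.
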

\begin{proof}
The first claim holds trivially, since a subset of a path from $s'_i$ to $t_i$ is not generally an $(s'_i-t_i)$ path.  

For the second claim, we observe that the maximum possible reward collected by any algorithm during any round is $1$. This can be obtained by playing a path from $s'_i$ to $t_i$ for some $i \in [m]$, if such a path exists among the available edges. Given a graph $\mathcal{G}'$ and pairs of vertices $(s'_i,t_i)$ for $i \in [m]$, we can find a path, if one exists, in polynomial time (e.g. simply by performing a DFS for each one of the nodes $s'_i$).  

For the third claim, we consider a solution to the \CBBSD~instance of average reward equal to $\frac{m'}{m}$, for some $m'$. Then, there exists an interval of $m$ consecutive rounds of total reward at least $m'$. Given that $T$ is polynomial in $m$, we can observe all possible sub-intervals of $m$ consecutive rounds and choose the one of maximum total reward. Let $L \subseteq [T]$ with $|L| = m$ be the above sub-interval of total reward at least $m'$. It is not hard to see that by repeating the arm-pulling pattern in $L$ for $c=\frac{T}{m}$ times, consecutively, the resulting solution has average reward at least $\frac{m'}{m}$. It remains to verify that the resulting solution is feasible. First, we observe that each set of arms played at each round is feasible, since $L$ is part of a feasible solution. Moreover, given that the arms of each set selected at every time step are played exactly once in each period of $m$ rounds and the arm delays are all equal to $m$, the resulting schedule does not violate the blocking constraints.
\end{proof}

The above claim allows us to compare the reward of our approximation algorithm to the reward of the optimal algorithm within a period of duration $m$. Focusing on a period of $m$, we distinguish the following two cases:

{\bf Case 1 (Yes instance):} Suppose that in the given EDP instance, all $m$ pairs can be routed by edge-disjoint paths. Then, one possible solution can be obtained by playing the sets of arms corresponding to each of the $m$ paths in a round-robin manner. It is easy to verify that the resulting periodic schedule is feasible and has an average reward of $1$. Let us denote by $\Rew^{*}$ the average reward of an optimal algorithm, and by $\Rew^{apx}$ the corresponding average reward collected by the $(\frac{k}{2})^{-\frac{1}{2}+\epsilon}$-approximation algorithm. By the above discussion, the average reward collected by an optimal algorithm must satisfy $\Rew^{*} \geq 1$. Thus, the approximation algorithm collects an average reward such that:
\begin{align*}
    \Rew^{apx} \geq
    \frac{1}{(\frac{k}{2})^{\frac{1}{2}-\epsilon}} \Rew^{*} 
    \geq
    \frac{1}{(\frac{k}{2})^{\frac{1}{2}-\epsilon}}.
\end{align*}

{\bf Case 2 (No instance):} We now focus on the case where at most a $\frac{1}{k'^{1/2 - \epsilon'}}$-fraction of the pairs in $\mathcal{T}$ can be connected by edge-disjoint paths. Let the number of edge-disjoint paths in $\mathcal{T}$ be $m'$, with $m'\leq \frac{m}{k'^{1/2 - \epsilon'}}$. Then, no algorithm can collect an average reward greater that $\frac{m'}{m}$, because: (i) at every time step, any algorithm can collect a reward of at most $1$, by playing a feasible path, and (ii) the arms that correspond to a feasible path become blocked for the next $m-1$ rounds after they are played. Thus, collecting an average reward greater than $\frac{m'}{m}$ (that is, collecting a reward greater than $m'$ within some period of duration $m$) suggests that there exist more than $m'$ edge-disjoint paths in $\mathcal{T}$. Based on the above observation, the reward collected by the approximation algorithm is:
\begin{align*}
    \Rew^{apx} \leq \Rew^{*} \leq \frac{1}{(k')^{\frac{1}{2}-\epsilon'}} \leq \frac{1}{(\frac{k}{2})^{\frac{1}{2}-\epsilon'}},
\end{align*}
where, for the last inequality, we use that $k' \geq \frac{k}{2}$, since $k = k' + m$ and $m \leq k'$. Finally, by choosing any $\epsilon'>0$ such that $\epsilon'< \epsilon$, we are able to distinguish between the two cases, which concludes our proof. 
\end{proof}

\hardnessDown*
\begin{proof}
In the case where the feasible set of arms satisfies the hereditary property, we establish the hardness of the \CBBSD~problem through a reduction from Max-k-Cover. Consider any instance of Max-k-Cover, where $U = \{e_1, \dots, e_k\}$ is the ground set, $S_1, \dots, S_m \subseteq U$ is the given family of subsets and $l$ is the number of subsets we can collect in order to cover the maximum possible number of elements in $U$.

We consider the following instance of the \CBBSD~problem: 
\begin{itemize}
    \item The set of arms $\A$, with $k=|A|$, contains one arm for each element in $U$. 
    \item Every arm in $\A$ has a deterministic reward of $1$ and a deterministic delay equal to $l$, i.e. the number of subsets we can collect in the Max-k-Cover instance.
    \item The feasible set is defined as
    $
    \I = \{S \subseteq \A~|~\exists i \in [m]~\text{s.t.}~S\subseteq S_i\}
    $, that is, a subset of arms is feasible if it is contained in at least one set of the given family $S_1, \dots, S_m$. 
    \item We define the time horizon to be $T = l \cdot \lceil g(k,m) \rceil$ with $g(k,m) \in poly(k,m)$, that is, an integer multiple of $l$ that is polynomial in $k$ and $m$.
\end{itemize}

Given the above construction, we make the following observations:

\begin{claim} \label{claim:hardness:down}
For the above instance of \CBBSD~problem, we have:
\begin{enumerate}
    \item The family of feasible sets $\I$ satisfies the hereditary property.
    \item Given a subset $\F$ of available arms, computing the maximum reward feasible set in $\I$, that is contained in $\F$, can be achieved in polynomial-time.
    \item Any solution to the above instance can be transformed in polynomial-time into a periodic schedule of period $l$ and with at least the same total reward.
\end{enumerate}
\end{claim}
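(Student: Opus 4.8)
\emph{Proof plan.} All three parts follow directly from the construction, mirroring the analogous claim in the proof of \Cref{thm:hardness:poly}.

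Part~1 is immediate from unfolding the definition $\I = \{S \subseteq \A \mid \exists i \in [m] \text{ s.t. } S \subseteq S_i\}$: if $S \in \I$, then $S \subseteq S_i$ for some $i$, and any $S' \subseteq S$ also satisfies $S' \subseteq S_i$, hence $S' \in \I$. For Part~2, note that every arm has reward exactly $1$, so a maximum-reward feasible subset of an available set $\F$ is precisely a maximum-cardinality set in $\I(\F)$. Any feasible set contained in $\F$ lies, by definition, inside some $S_j$, hence inside $S_j \cap \F$, which is itself feasible and contained in $\F$; so it suffices to compute $\arg\max_{j \in [m]} |S_j \cap \F|$ by iterating over the $m$ given sets, which is polynomial in $k$ and $m$.

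For Part~3, I would partition the horizon $[T]$ into $T/l = \lceil g(k,m)\rceil$ disjoint consecutive blocks, each of length exactly $l$ (possible since $T$ is an integer multiple of $l$). If the given feasible schedule has total reward $R$, then by averaging some block $L$ collects reward at least $l R/T$. I then define the periodic schedule that plays, in each of the $T/l$ periods, exactly the sets played during $L$; its total reward is at least $(T/l)\cdot(l R/T) = R$. Feasibility requires only a short check: each per-round set is feasible because it was feasible in the original schedule, and since every arm is played at most once per period of length $l$ while all delays equal $l$, the $(l-1)$-round blocking window following any pull never reaches the next period in which that arm is played — so no blocking constraint is violated, including across period boundaries. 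Since $g$ is polynomial in $k$ and $m$, both locating $L$ and writing down the periodic schedule take polynomial time.

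None of the steps poses a genuine obstacle; the only point requiring a little care is the feasibility check of the periodic schedule at the period boundaries, which hinges precisely on the delays being exactly $l$ and on each arm appearing at most once per period.
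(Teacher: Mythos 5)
Your proposal is correct and follows essentially the same route as the paper's proof: the hereditary property is unwound directly from the definition of $\I$, the maximizer of $|S_j \cap \F|$ over $j \in [m]$ is found by exhaustive search over the $m$ sets, and the periodic schedule is obtained by extracting a best length-$l$ window and repeating it, with feasibility at the boundaries following from the delays all equaling $l$. The only cosmetic difference is that you average over the $T/l$ disjoint blocks while the paper scans all consecutive sub-intervals of length $l$; both yield a block of average reward at least $R$ and the rest of the argument is identical.
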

\begin{proof}
Showing the first claim is straightforward. By definition, for any set $S \in \I$ we have that $S \subseteq S_i$ for some $i \in [m]$. Therefore, any subset $S' \subset S$ is also contained in $S_i$ and, thus, $S' \in \I$. 

For the second claim, given a set of available arms $\F$,  the maximizer of $\max_{i \in [m]} \{|F \cap S_i|\}$ can be computed in polynomial time by trying all sets $S_1, \dots, S_m$ and choose the one of maximum intersection with $F$.

Let us now focus on the third claim. Consider a solution to the \CBBSD~instance of average reward equal to $R$. Clearly, there exists an interval of $l$ consecutive rounds of total reward at least $l \cdot R$. Otherwise, the total reward over $T = l \cdot \lceil g(k,m) \rceil$ rounds should be strictly less than $l \cdot \lceil g(k,m) \rceil \cdot R = T \cdot R$ and, thus, the average reward cannot be equal to $R$. Since $T$ is polynomial in $l, m$ and $k$, we can observe all possible sub-intervals of $l$ consecutive rounds and choose the one of maximum total reward. 

Let $L \subseteq [T]$ with $|L| = l$ be the above sub-interval of total reward at least $l \cdot R$ (and average reward $R$). It is not hard to see that by repeating the arm-pulling pattern in $L$ for $\frac{T}{l}$ times, consecutively, the resulting solution has average reward at least $R$. Thus, it remains to verify that the resulting solution is feasible. Clearly, given that $L$ is part of a feasible solution, the set of arms played at each round is in $\I$. Finally, given that each arm is played exactly once in each period of $l$ rounds and the arm delays are all deterministic and equal to $l$, the resulting solution satisfies the blocking constraints.
\end{proof}

We are now ready to prove our reduction. Let $\OPT$ be the optimal solution of Max-k-Cover and $\Rew^*$ be the solution in the above instance of \CBBSD. 

Clearly, when $\OPT \geq f$ for some integer $f$, then it has to be that $\Rew^* \geq \frac{T}{l} f$. Indeed, let $\{T^*_1, \dots, T^*_l\} \subseteq \{S_1, \dots, S_m\}$ be the optimal solution of Max-k-Cover, such that $|\bigcup_{i \in [l]} T^*_l| \geq f$. Then, we can construct a feasible $l$-periodic solution to the corresponding instance of CBB as follows: In a single $l$-period, the algorithm consecutively plays the arms in $(T^*_1)$, $(T^*_2 \setminus T^*_1)$, $(T^*_3 \setminus T^*_2, T^*_1)$, $\dots$, $(T^*_l\setminus T^*_1, \dots T^*_{l-1})$. Clearly, the resulting solution is feasible and has total reward at least $\frac{T}{l} f$. 

We now consider the case where $\Rew^* \geq \frac{T}{l} f$ for some integer $f$. Then, for the optimal solution of Max-k-Cover, it has to be that $\OPT \geq f$. This holds since, w.l.o.g., we can assume that the optimal solution of \CBBSD~has an $l$-periodic structure, as described in Claim \ref{claim:hardness:down}. Therefore, since the total reward is $\frac{T}{l} f$ and there are exactly $\frac{T}{l}$ periods, the reward of each period must be at least $f$. By focusing on the first period, let $\{\A_1, \dots, \A_l\}$ be arms played at each round from $1$ to $l$. Since $A_t \in \I$ for each $t \in [l]$, let $T_t$ be a set in $\{S_1, \dots, S_m\}$ such that $\A_t \subseteq T_t$. Clearly, the subfamily $\{T_1, \dots, T_l\}$ consists a feasible solution to the Max-k-Cover problem such that $|\bigcup_{t \in [l]} T_l| \geq f$.

By the above analysis we can conclude that, for $\epsilon > 0$, an efficient $\left(1- \frac{1}{e} + \epsilon \right)$-approximation algorithm for \CBBSD~would imply a $\left(1- \frac{1}{e} + \epsilon \right)$-approximation algorithm for Max-k-Cover. However, by Theorem \ref{thm:feige}, we know that this is impossible, unless P = NP. 
\end{proof}
\section{Full-Information Setting}

\stochastic*
\begin{proof}
Recall that we denote by $\A^*_t$ and $\B^*_t$ the set of played and blocked arms, respectively, at any time $t$.
The event that arm $i$ is played at round $t$ and the event that $i$ is blocked at $t$ are mutually exclusive, namely, $
\event{i \in \A_t^*}+ \event{i \in \B_t^{*}}\leq 1$. Further, an arm $i$ is blocked at round $t$ if and only if the arm has been played in some previous time step and is still unavailable at $t$. Therefore, at any round $t$, we get: 
\begin{equation*}
    \event{i \in \A_t^*}+ \sum_{t'<t}\event{i \in \A_{t'}^*}\event{D_{i,t'}>t-t'}\leq 1.
\end{equation*}

Note that the LHS of the inequality above cannot be greater than $1$ because the events are mutually exclusive. Indeed, if an arm has been played at some round $t' < t$ and remains blocked until after round $t$, then this arm cannot have been played at any subsequent time point between $t'$ and $t$ (including $t$). 

By taking expectation in the above expression, we have that: 
\begin{align}
\Pro{i \in \A_t^*}+ \sum_{t'<t}\Ex{}{\event{i \in \A_{t'}^*} \event{D_{i,t'}>t-t'}}\leq 1, \quad \forall t \in [T] \label{inq:lem1:alpha}
\end{align}

For any two rounds $t$ and $t'$ such that $t' < t$, since we assume that the optimal algorithm is not aware of the delay realization of an arm before playing it, the events $\{i \in \A_{t'}^*\}$ and $\{D_{i,t'}>t-t'\}$ are conditionally independent given the history of arms played up to (and including) time $t'-1$. Let us denote the history up to $t$ as $H^*_{t}$, then: 
\begin{align*}
    \Ex{}{\event{i \in \A_{t'}^*} \event{D_{i,t'}>t-t'}} &= \Ex{}{ \Ex{}{\event{i \in \A_{t'}^*} \event{D_{i,t'}>t-t'} ~|~H^*_{t'-1}} }\\
    &= \Ex{}{ \Ex{}{\event{i \in \A_{t'}^*}~|~H^*_{t'-1} } \Ex{}{\event{D_{i,t'}>t-t'} ~|~H^*_{t'-1}} }\\
    &= \Pro{i \in \A_{t'}^*}\Pro{D_{i,t'}>t-t'} .
\end{align*}

By summing \eqref{inq:lem1:alpha} over all rounds $t\in [T]$ and since $\Pro{D_{i,t}>0} = 1~\forall i \in \A \forall t \in [T]$, we get:
\begin{equation}
    \sum_{t\in [T]}\sum_{t'\leq t}\Pro{i \in \A_{t'}^*}
    \Pro{D_{i,t'}>t-t'} = \sum_{t\in [T]}\Pro{i \in \A_t^*}
    + \sum_{t\in [T]}\sum_{t'<t}\Pro{i \in \A_{t'}^*}\Pro{D_{i,t'}>t-t'}\leq T \label{eq:sum_blocked} .
\end{equation}

Since the variables $D_{i,t}$ for $t\in [T]$ are i.i.d., we can consider any sample $D_i$ from the delay distribution of arm $i$ and have $\Pro{D_{i}>t-t'} = \Pro{D_{i,t}>t-t'}$. Moreover, since $D_{i}$ has support in $\{1,...,d_i^{\max}\}$, we can write:

\begin{align*}
    &\sum_{t\in [T]}\sum_{t'\leq t}\Pro{i \in A_{t'}^*}\Pro{D_{i}>t-t'} \\
    &=\sum_{t\in [T]}\sum_{t'=\max\{1,t-d_i^{\max}\}}^{t} 
    \Pro{i \in \A_{t'}^*}\Pro{D_{i}>t-t'}\\
    &= \sum_{t'\in [T]}\sum_{t=t'}^{\min\{T,t'+d_i^{\max}-1\}}
    \Pro{i \in \A_{t'}^*}\Pro{D_{i}>t-t'} \\
    &=\sum_{t'\in [T]} 
    \Pro{i \in \A_{t'}^*}
    \sum_{t=t'}^{\min\{T,t'+d_i^{\max}-1\}}
    \Pro{D_{i}>t-t'} \\
    &\geq \sum_{t'\in [T]} 
    \Pro{i \in \A_{t'}^*}
    \sum_{t=t'}^{t'+d_i^{\max}-1}
    \Pro{D_{i}>t-t'}  -
    \sum_{t'=T-d_i^{\max}+1}^{T}
    \sum_{t=t'}^{T}
    \Pro{D_{i}>t-t'}  \\
    &=\sum_{t'\in [T]} 
    \Pro{i \in \A_{t'}^*}
    \Ex{}{D_i}  -
    \sum_{t'=T-d_i^{\max}+1}^{T}
    \sum_{t=t'}^{T}
    \Pro{D_{i}>t-t'},
\end{align*}
where the inequality is due to upper bounding the probability $\Pro{i \in A_{t'}^*}$ by $1$. Further, in the last equality we use the fact that $D_i$ is a strictly positive random variable with bounded support in $\{1,\dots, d^{\max}_i\}$, thus: $$\Ex{}{D_i} = \sum_{t=t'}^{t'+d_i^{\max}-1}\Pro{D_{i}>t-t'}.$$

Therefore, inequality \eqref{eq:sum_blocked} becomes:
\begin{align*}
    \sum_{t'\in [T]} 
    \Pro{i \in \A_{t'}^*}
    \Ex{}{D_i} -
    \sum_{t'=T-d_i^{\max}+1}^{T}
    \sum_{t=t'}^{T}
    \Pro{D_{i}>t-t'} \leq T .
\end{align*}
Finally, by rearranging terms in the above expression and dividing by $\Ex{}{D_i} \cdot T$, we conclude that:
\begin{align*}
    \frac{1}{T} \sum_{t'\in [T]} \Pro{i \in \A_{t'}^*} 
    &\leq 
    \frac{1}{\Ex{}{D_i}}
    +
    \frac{1}{T \Ex{}{D_i}}
    \sum_{t'=T-d_i^{\max}+1}^{T}
    \sum_{t=t'}^{T}
    \Pro{D_{i}>t-t'}  \\
    &\leq 
    \frac{1}{\Ex{}{D_i}}
    +
    \frac{d_i^{\max} \Ex{}{D_i}}{T \Ex{}{D_i}} \\
    &= 
    \frac{1}{\Ex{}{D_i}} + \frac{d_i^{\max}}{T} ,
\end{align*}
where in the last inequality we use the fact that the second sum $\sum_{t=t'}^{T}\Pro{D_{i}>t-t'}$ can be upper bounded by ${\Ex{}{D_i}}$, for all $t'$ specified in the first sum. 

\end{proof}

\section{Bandit Setting: Omitted Proofs}

\gammaDecomp*
\begin{proof}
Let $H^{\pit}_t$ be the history of arm playing of our bandit algorithm up to (including) time $t$. The expected reward collected by our algorithm can be expressed as:
\begin{align*}
\Ex{}{ \sum_{t \in [T]} \sum_{i \in \A} X_{i,t} \event{i \in \A^{\pit}_t} } 
&= \sum_{t \in [T]}\sum_{i \in \A} \Ex{}{X_{i,t} \event{i \in \A^{\pit}_t} } \\
&= \sum_{t \in [T]}\sum_{i \in \A} \Ex{}{\Ex{}{X_{i,t} \event{i \in \A^{\pit}_t}|H^{\pit}_{t-1}}} \\
&= \sum_{t \in [T]}\sum_{i \in \A} \Ex{}{\Ex{}{X_{i,t}|H^{\pit}_{t-1}} \Ex{}{\event{i \in \A^{\pit}_t}|H^{\pit}_{t-1}}} \\
&= \sum_{t \in [T]}\sum_{i \in \A} \Ex{}{\mu_i \event{i \in \A^{\pit}_t}} \\
&= \sum_{t \in [T]}\Ex{}{\mu(\A^{\pit}_t)},
\end{align*}
where we use the fact that, at each round $t \in [T]$, the reward $X_{i,t}$ and the choice of the set $\A^{\pit}_t$ are independent conditioned on the history $H^{\pit}_t$.

We recall the expected pulling rate of each arm $i \in \A$ by an optimal full information algorithm that satisfies Conditions \ref{assumption:delays} and \ref{assumption:independence}, as defined in \Cref{sec:fullinformation}: $$z_i = \Ex{}{\frac{1}{T}\sum_{t\in [T]}\event{i \in \A_t^*}},$$
where $\A^*_t$ is the set of arms played by the optimal algorithm at time $t$.

A direct application of the definition of $\Gamma^{\pit}_t$, for every $t \in [T]$, gives:
\begin{align*}
    \mu(\A^{\pit}_t) = \alpha \cdot \beta \cdot \mu(\OPT_{\mu}(\F^{\pit}_t)) - \Gamma^{\pit}_t.
\end{align*}

By working along the lines of \Cref{thm:full_information} and carrying the extra term of $- \Gamma^{\pit}_t$, we can see that for any round $t \in [T]$:
\begin{align*}
\Ex{}{\mu(\A^{\pit}_t)}
&= \alpha \cdot \beta \cdot\Ex{}{\mu(\OPT_{\mu}(\F^{\pit}_t))} - \Ex{}{\Gamma^{\pit}_t}\\
&\geq 
\alpha\cdot\beta \cdot \Ex{}{\sum_{i \in \F^{\pit}_t} \mu_i z_i} - \Ex{}{\Gamma^{\pit}_t} \\
&= 
\alpha\cdot\beta \sum_{i \in \A} \mu_i z_i - \alpha\cdot\beta \Ex{}{\sum_{i \in \B_{t}^{\pit}} \mu_i z_i} - \Ex{}{\Gamma^{\pit}_t},
\end{align*}
where the inequality above is due to \Cref{lemma:hereditary} and expectations are taken over all sources of randomness (that is, reward and delay realizations). 

By summing over $t\in T$ and using the fact that $\Ex{}{\sum_{t \in [T]}\sum_{i \in \B^{\pit}_{t}} \mu_i z_i} \leq \Rew^{\pit}(T) + \mathcal{O}(d_{\max}\cdot k)$ (which follows exactly as in \Cref{inq:online:second} in \Cref{thm:full_information}), we have:
\begin{align*}
    \Rew^{\pit}(T)=\Ex{}{\sum_{t\in T}\mu(\A^{\pit}_t)} \geq \alpha\beta \Rew^{*}(T) - \alpha\beta \Rew^{\pit}(T) - \mathcal{O}\left({d_{\max}\cdot k}\right) - \Ex{}{\sum_{t\in T}\Gamma^{\pit}_t} .
\end{align*}
By rearranging terms in the expression above, we get that:
\begin{align*}
    \Rew^{\pit}(T) \geq \frac{\alpha\cdot \beta}{1 + \alpha\cdot \beta} \Rew^{*}_I(T) - \mathcal{O}\left({d_{\max}\cdot k}\right) - \frac{1}{1 + \alpha\cdot \beta} \Ex{}{\sum_{t \in [T]} \Gamma^{\pit}_t}.
\end{align*}
Thus, by definition of the $\rho$-regret, we can conclude that:
\begin{align*}
    \rho\Reg^{\pit}(T) = \frac{\alpha\beta}{1 + \alpha\beta}\Rew^{*}_I(T) - \Rew^{\pit}(T) \leq \frac{1}{1 + \alpha\cdot \beta} \Ex{}{\sum_{t \in [T]} \Gamma^{\pit}_t} + \mathcal{O}(d_{\max}\cdot k).
\end{align*}
\end{proof}

\niceRun*
\begin{proof}
This result is due to \cite{CWYW16}, but we include its proof here for completeness.

For any $T_{i,t-1}>0$ and $i\in \A$, using Hoeffding's inequality (see \cite{MU2005}) and the fact that $T_{i,t-1}\leq t-1$, we have that:
\begin{align*}
    \Pro{|\hat\mu_{i,t-1} - \mu_{i}|\geq \sqrt{\frac{3 \ln{(t)}}{2T_{i,t-1}}}} \leq \sum_{s=1}^{t-1} \Pro{|\hat\mu_{i,t-1} - \mu_{i}|\geq \sqrt{\frac{3 \ln{(t)}}{2s}}} \leq 2te^{-3\ln{t}} \leq \frac{2}{t^2}.
\end{align*}
The lemma follows by taking union bound over $i\in \A$ and using the above inequality:
\begin{align*}
    \Pro{\neg \mathcal{N}_t} &= \Pro{\exists i\in \A, ~s.t.~|\hat\mu_{i,t-1} - \mu_{i}| > \sqrt{\frac{3 \ln{(t)}}{2T_{i,t-1}}}} \\ 
    &\leq \sum_{i \in \A} \Pro{|\hat\mu_{i,t-1} - \mu_{i}| > \sqrt{\frac{3 \ln{(t)}}{2T_{i,t-1}}}} \\
    &\leq \frac{2k}{t^2} ~.
\end{align*}

\end{proof}

\sumOfKappa*
\begin{proof}
The proof of this lemma follows closely the analysis of \cite{WC17} for the case of a fixed availability set. However, in our case the availability set at every round is correlated with previous actions. To tackle this issue, we make the following relaxation:  For each arm $i$, we consider the minimum possible suboptimality gap associated with arm $i$, $\Delta_i^{\min}$ (see \Cref{def:differences}). This allows us to keep track of the number of rounds where our bandit algorithm can select a bad feasible solution containing arm $i$ (w.r.t. the availability set of the round), by examining whether $T_{i}\leq  \ell_T(\Delta_{\min}^i)$. This relaxation allows us to eventually drop the dependence on the availability set. 

Observe that, by definition of $\mathcal{N}_t$, at any time $t$, if $\mathcal{N}_t$ holds, then $\forall{i \in \A}$ the UCB indices at round $t$ can be bounded as follows:
\begin{align}
    |\mub_{i,t-1}-\mu_i|\leq \min\Big\{2\sqrt{\frac{3\ln{t}}{2T_{i,t-1}}},1\Big\},
    \label{eq:nice_sampling}
\end{align} 
and: 
\begin{align} \label{eq:overestimators}
    \mub_{i,t-1} \geq \mu_i .
\end{align}
Then, if $\mathcal{Q}_t$ holds, we have that:
\begin{align*}
    \mub(\A^{\pit}_t) 
    \geq \alpha\cdot\mub(\OPT_{\mub}(\F^{\pit}_t)) 
    \geq \alpha\cdot\mub(\OPT_{\mu}(\F^{\pit}_t)) ,
\end{align*}
where for the last inequality we use the definition of $\OPT_{\mub}$.
By applying \Cref{eq:overestimators}, we get that:
\begin{align*}
    \mub(\A^{\pit}_t) 
    \geq \alpha\cdot\mub(\OPT_{\mu}(\F^{\pit}_t)) \geq
    \alpha\cdot\mu(\OPT_{\mu}(\F^{\pit}_t)) = \mu(\A^{\pit}_t) + \Delta(\A^{\pit}_t,\F^{\pit}_t),
\end{align*}
where for the last equality we use the definition of $\Delta(\A^{\pit}_t,\F^{\pit}_t)$.

By reordering terms, we get that:
\begin{align}\label{eq:sum_of_mus}
    \Delta(\A^{\pit}_t,\F^{\pit}_t) \leq \mub(\A^{\pit}_t) - \mu(\A^{\pit}_t) \leq \sum_{i\in \A^{\pit}_t} |\mub_i-\mu_i| .
\end{align}
The event $\A^{\pit}_t\in \mathcal{S}_B(\F^{\pit}_t)$ dictates that a bad feasible solution has been played at round $t$. Then, using the definition of $\Delta_{\min}^i$, for the suboptimality gap of the set selected at round $t$ we have that:  $$\Delta(\A^{\pit}_t,\F^{\pit}_t)\geq \max_{i\in\A^{\pit}_t}\Delta_{\min}^i.$$ 
If we apply this inequality to \Cref{eq:sum_of_mus}, we get: 
\begin{align}\label{eq:positive_difference}
    \sum_{i\in \A^{\pit}_t} |\mub_i-\mu_i| - \max_{i\in\A^{\pit}_t}\Delta_{\min}^i \geq 0 .
\end{align} 
Now, using \Cref{eq:sum_of_mus,eq:positive_difference} we can bound the suboptimality gap of the bad set selected at round $t$ as follows:
\begin{align*}
    \Delta(\A^{\pit}_t,\F^{\pit}_t) &\leq \sum_{i\in \A^{\pit}_t} |\mub_i-\mu_i| \leq 2 \sum_{i\in \A^{\pit}_t} |\mub_i-\mu_i| - \max_{i\in\A^{\pit}_t}\Delta_{\min}^i \leq 2 \sum_{i\in \A^{\pit}_t} (|\mub_i-\mu_i| - \frac{\Delta_{\min}^i}{2|\A^{\pit}_t|}) \leq 2 \sum_{i\in \A^{\pit}_t} \Big(|\mub_i-\mu_i| - \frac{\Delta_{\min}^i}{2r}\Big) ,
\end{align*}
where in the last inequality we use that $|\A^{\pit}_t|\leq r$, by definition of $r = \max_{S \in \I}\{|S|\}$. Using \Cref{eq:nice_sampling}, the expression above becomes:
\begin{align*}
    \Delta(\A^{\pit}_t,\F^{\pit}_t) 
    &\leq 2 \sum_{i\in \A^{\pit}_t} \Big(\min\{2\sqrt{\frac{3\ln{t}}{2T_{i,t-1}}},1\} - \frac{\Delta_{\min}^i}{2r}\Big) 
    = \sum_{i\in \A^{\pit}_t} \Big(\min\{\sqrt{\frac{24\ln{t}}{T_{i,t-1}}},2\} - \frac{\Delta_{\min}^i}{r}\Big)  ~.
\end{align*}
Now, observe that if $T_{i,t-1} \leq \ell_{T}(\Delta_{\min}^i)$, then $\min\{\sqrt{\frac{24\ln{t}}{T_{i,t-1}}},2\} - \frac{\Delta_{\min}^i}{r} \leq \min\{\sqrt{\frac{24\ln{t}}{T_{i,t-1}}},2\} = \kappa_T(\Delta_{\min}^i,T_{i,t-1}) $.
On the other hand, if $T_{i,t-1} > \ell_{T}(\Delta_{\min}^i)$, then $\min\{\sqrt{\frac{24\ln{t}}{T_{i,t-1}}},2\} \leq \sqrt{\frac{24\ln{t}}{T_{i,t-1}}} \leq \sqrt{\frac{24\ln{t}}{24  \ln{T}r^2/(\Delta_{\min}^i)^2}}\leq \frac{\Delta_{\min}^i}{r}$, thus $\min\{\sqrt{\frac{24\ln{t}}{T_{i,t-1}}},2\} - \frac{\Delta_{\min}^i}{r}\leq 0 = \kappa_T(\Delta_{\min}^i,T_{i,t-1})$. Therefore, we conclude that:
\begin{align*}
    \Delta(\A^{\pit}_t,\F^{\pit}_t) \leq  \sum_{i\in \A^{\pit}_t} \kappa_T(\Delta_{\min}^i,T_{i,t-1}) .
\end{align*}

\end{proof}

\thmRegret*

\begin{proof}
We first focus on the distribution-dependent bound of the regret. According to \Cref{lemma:gamma}, to bound the $\rho$-approximate regret of our bandit algorithm, it suffices to bound the loss accumulated due to the instantaneous regret over time. This loss can be rewritten as:
\begin{align}
    \Ex{}{\sum_{t \in [T]} \Gamma^{\pit}_t} &= \Ex{}{\sum_{t \in [T]} \Big( \alpha \cdot \beta \cdot \mu(\OPT_{\mu}(\F^{\pit}_t)) - \mu(\A^{\pit}_t) \Big)} \nonumber\\
    &= \Ex{}{\sum_{t \in [T]} \Big(\alpha \cdot \mu(\OPT_{\mu}(\F^{\pit}_t)) - \mu(\A^{\pit}_t) + \alpha \cdot (\beta-1) \cdot \mu(\OPT_{\mu}(\F^{\pit}_t)) \Big)} \nonumber.
\end{align}
Focusing on the term $\alpha \cdot \mu(\OPT_{\mu}(\F^{\pit}_t)) - \mu(\A^{\pit}_t)$ of the loss, observe that this quantity is positive only when the algorithm plays a bad feasible solution w.r.t. the availability set $\F^{\pit}_t$. Thus, we can write:
\begin{align}
    \Ex{}{\sum_{t \in [T]} \Gamma^{\pit}_t} 
    &= \Ex{}{\sum_{t \in [T]} \Big( \alpha \cdot \mu(\OPT_{\mu}(\F^{\pit}_t)) - \mu(\A^{\pit}_t) + \alpha \cdot (\beta-1) \cdot \mu(\OPT_{\mu}(\F^{\pit}_t)) \Big)} \nonumber\\
    &\leq \Ex{}{\sum_{t \in [T]} \Big(\event{\A^{\pit}_t \in \mathcal{S}_B(\F^{\pit}_t)}(\alpha \cdot \mu(\OPT_{\mu}(\F^{\pit}_t) - \mu(\A^{\pit}_t))) + \alpha \cdot (\beta-1) \cdot \mu(\OPT_{\mu}(\F^{\pit}_t)) \Big)} \nonumber\\
    &= \Ex{}{\sum_{t \in [T]} \Big(\event{\A^{\pit}_t \in \mathcal{S}_B(\F^{\pit}_t)}\Delta(\A^{\pit}_t,\F^{\pit}_t) + \alpha \cdot (\beta-1) \cdot \mu(\OPT_{\mu}(\F^{\pit}_t)) \Big)} \label{eq:bad_set_case}.
\end{align}

Now, we can use the idea of \cite{WC17} to treat the parts of the regret that result from oracle fails or non-representative sampling separately. However, in our case, we also have to address the fact that our availability set changes over time, depending on past actions. We decompose the indicator in \Cref{eq:bad_set_case} based on whether the events $\mathcal{Q}_t$ and $\mathcal{N}_t$ hold, as follows: 
\begin{align}
    \eqref{eq:bad_set_case}
    &\leq \Ex{}{\sum_{t \in [T]} \event{\mathcal{Q}_t,\mathcal{N}_t,\A^{\pit}_t\in \mathcal{S}_B(\F^{\pit}_t)}\cdot\Delta(\A^{\pit}_t,\F^{\pit}_t) } \label{eq:QNBad}\\
    &+ \Ex{}{\sum_{t \in [T]} \event{\neg \mathcal{Q}_t,\A^{\pit}_t\in \mathcal{S}_B(\F^{\pit}_t)}\cdot\Delta(\A^{\pit}_t,\F^{\pit}_t) + \alpha \cdot (\beta-1) \cdot \mu(\OPT_{\mu}(\F^{\pit}_t))} \label{eq:oracle_fail}\\
    &+\Ex{}{\sum_{t \in [T]} \event{\neg \mathcal{N}_t,\A^{\pit}_t\in \mathcal{S}_B(\F^{\pit}_t)}\cdot\Delta(\A^{\pit}_t,\F^{\pit}_t)} \label{eq:samples_fail}.
\end{align}

We bound each one of the above terms, separately. For \eqref{eq:oracle_fail}, to address the issue that the availability set changes at each round, we consider the suboptimality gap of the worst feasible solution contained in the availability set $\Sc$,  $\Delta_{max}(\Sc)$, namely:
\begin{equation*}
    \Delta_{\max}(\Sc) = \max_{i\in \A,~S\in S_{i,B}(\Sc)}\Delta(S,\Sc) .
\end{equation*}
We use the above definition to upper bound the suboptimality gap, $\Delta(\A^{\pit}_t,\F^{\pit}_t)$, appearing in \Cref{eq:oracle_fail}. We have that:
\begin{align}
    \eqref{eq:oracle_fail} &\leq \Ex{}{\sum_{t \in [T]} \Big( \event{\neg \mathcal{Q}_t}\cdot \Delta_{\max}(\F^{\pit}_t) + \alpha \cdot (\beta-1) \cdot \mu(\OPT_{\mu}(\F^{\pit}_t))\Big)} \nonumber\\
    &= \Ex{}{\sum_{t \in [T]}\sum_{\Sc\subseteq \A} \Ex{}{\event{\neg \mathcal{Q}_t}\cdot \Delta_{\max}(\Sc) + \alpha \cdot (\beta-1) \cdot \mu(\OPT_{\mu}(\Sc)) ~|~ \F^{\pit}_t = \Sc } \event{\F^{\pit}_t = \Sc}} \nonumber\\
    &= \Ex{}{\sum_{t \in [T]}\sum_{\Sc\subseteq \A} \Big(\Pro{\neg \mathcal{Q}_t ~|~ \F^{\pit}_t = \Sc}\cdot \Delta_{\max}(\Sc) + \alpha \cdot (\beta-1) \cdot \mu(\OPT_{\mu}(\Sc))\Big)\event{\F^{\pit}_t = \Sc} } \nonumber\\
    &\leq \Ex{}{\sum_{t \in [T]}\sum_{\Sc\subseteq \A} \Big((1-\beta)\cdot \Delta_{\max}(\Sc) + \alpha \cdot (\beta-1) \cdot \mu(\OPT_{\mu}(\Sc))\Big)\event{\F^{\pit}_t = \Sc} } \nonumber\\
    &\leq 0  \label{eq:oracle_fail_res},
\end{align}
where to obtain the first equality we use the law of total expectation. The second inequality holds since the success probability of the oracle is at least $\beta$ independently of the availability set, thus, $\Pro{\neg \mathcal{Q}_t ~|~ \F^{\pit}_t = \Sc} \leq 1-\beta$. Finally, the last inequality follows by the fact that, for the suboptimality gap of any bad feasible set $S$ (w.r.t. an availability set $\Sc$), we have $\Delta(S,\Sc)\leq \alpha\cdot \mu(\OPT_{\mu}(\Sc))$, thus, $\Delta_{\max}(\Sc)\leq \alpha\cdot \mu(\OPT_{\mu}(\Sc))$.

For the term \eqref{eq:samples_fail}, using the definition of $\Delta_{\max}$ and \Cref{lemma:nice_run}, we have that:
\begin{align}
    \eqref{eq:samples_fail} \leq \Ex{}{\sum_{t \in [T]} \event{\neg \mathcal{N}_t}\cdot\Delta_{\max} } = \sum_{t \in [T]} \Pro{\neg \mathcal{N}_t}\cdot\Delta_{\max} \leq \sum_{t\in [T]} \frac{2 k}{t^2} \Delta_{\max} \leq \frac{\pi^2 k}{3} \Delta_{\max} ~.
    \label{eq:samples_fail_res}
\end{align}

The term in \eqref{eq:QNBad} can be bounded by utilizing \Cref{lemma:kappas}:
\begin{align}
    \eqref{eq:QNBad} &= \Ex{}{\sum_{t \in [T]} \event{\mathcal{Q}_t,\mathcal{N}_t,\A^{\pit}_t\in \mathcal{S}_B(\F^{\pit}_t)}\cdot\Delta(\A^{\pit}_t,\F^{\pit}_t) }\nonumber\\
    &\leq \Ex{}{\sum_{t \in [T]}\sum_{i\in\A^{\pit}_t} \event{\mathcal{Q}_t,\mathcal{N}_t,\A^{\pit}_t\in \mathcal{S}_B(\F^{\pit}_t)} \kappa_T(\Delta_{\min}^i,T_{i,t-1})} \nonumber\\
    &\leq \Ex{}{\sum_{i\in\A}\sum_{s =0}^{T_{i,T}} \kappa_T(\Delta_{\min}^i,s)} , \label{eq:QNBad2}
\end{align}
where the last inequality holds because a bad set containing arm $i$ can be played at most $T_{i,T}$ times. Then, we substitute $\kappa_T(\Delta_{\min}^i,s)$ and use an integral to upper bound the inner sum, following the idea of \cite{WC17}:
\begin{align}
    \eqref{eq:QNBad2} 
    &\leq \Ex{}{\sum_{i\in\A}\sum_{s =0}^{\ell_{T}(\Delta_{\min}^i)} \kappa_T(\Delta_{\min}^i,s)} = 2k + \sum_{i\in\A}\sum_{s =1}^{\ell_{T}(\Delta_{\min}^i)} \sqrt{\frac{24 \ln(T)}{s}} \leq 2k + \sum_{i\in\A}\int_{s =0}^{\ell_{T}(\Delta_{\min}^i)} \sqrt{\frac{24 \ln(T)}{s}}\,ds \nonumber\\
    &\leq 2k + \sum_{i\in\A} 2\sqrt{24 \ln(T) {\ell_{T}(\Delta_{\min}^i)}} \leq 2k + \sum_{i\in\A} 2\sqrt{24 \ln(T) \frac{24 r^2 \ln{(T)}}{(\Delta_{\min}^i)^2}} \leq 2k + \sum_{i\in\A} 48 \frac{r}{\Delta_{\min}^i}\ln{(T)} ~. \label{eq:QNBad2_res}
\end{align}
We conclude our proof for the distribution-dependent bound by combining the above results with \Cref{lemma:gamma}.

We now focus on the distribution-independent bound of the regret. The following part is similar to the proof for the distribution-dependent bound. We rely on \Cref{lemma:gamma} to bound the $\rho$-approximate regret by bounding the expected loss due to the instantaneous regret over time. By \Cref{eq:bad_set_case} we have that:
\begin{align*}
    \Ex{}{\sum_{t \in [T]} \Gamma^{\pit}_t} 
    \leq \Ex{}{\sum_{t \in [T]} \Big( \event{\A^{\pit}_t \in \mathcal{S}_B(\F^{\pit}_t)}\Delta(\A^{\pit}_t,\F^{\pit}_t) + \alpha \cdot (\beta-1) \cdot \mu(\OPT_{\mu}(\F^{\pit}_t)) \Big) } .
\end{align*}

As before, we distinguish the following cases based on whether the events ${\mathcal{Q}_t}$ and ${\mathcal{N}_t}$ hold:

\begin{align}
    \Ex{}{\sum_{t \in [T]} \Gamma^{\pit}_t} 
    &\leq \Ex{}{\sum_{t \in [T]} \event{\mathcal{Q}_t,\mathcal{N}_t,\A^{\pit}_t\in \mathcal{S}_B(\F^{\pit}_t)}\cdot\Delta(\A^{\pit}_t,\F^{\pit}_t) } \label{eq:QNBad2a}\\
    &+ \Ex{}{\sum_{t \in [T]} \Big( \event{\neg \mathcal{Q}_t,\A^{\pit}_t\in \mathcal{S}_B(\F^{\pit}_t)}\cdot\Delta(\A^{\pit}_t,\F^{\pit}_t) + \alpha \cdot (\beta-1) \cdot \mu(\OPT_{\mu}(\F^{\pit}_t)) \Big) } \label{eq:oracle_fail2}\\
    &+\Ex{}{\sum_{t \in [T]} \event{\neg \mathcal{N}_t,\A^{\pit}_t\in \mathcal{S}_B(\F^{\pit}_t)}\cdot\Delta(\A^{\pit}_t,\F^{\pit}_t)} \label{eq:samples_fail2}.
\end{align}

The terms \eqref{eq:oracle_fail2} and \eqref{eq:samples_fail2} can be bounded using \Cref{eq:oracle_fail_res,eq:samples_fail_res}, respectively. As in \cite{WC17}, we define $M = \sqrt{(48kr\ln{T})/T}$. We use this definition to rewrite the term \eqref{eq:QNBad2a} as:
\begin{align}
    \eqref{eq:QNBad2a} =& \Ex{}{\sum_{t \in [T]} \event{\mathcal{Q}_t,\mathcal{N}_t,\A^{\pit}_t\in \mathcal{S}_B(\F^{\pit}_t), \Delta(\A^{\pit}_t,\F^{\pit}_t) \geq M}\cdot\Delta(\A^{\pit}_t,\F^{\pit}_t) } \label{eq:QNBad2a1}\\
    &+\Ex{}{\sum_{t \in [T]} \event{\mathcal{Q}_t,\mathcal{N}_t,\A^{\pit}_t\in \mathcal{S}_B(\F^{\pit}_t), \Delta(\A^{\pit}_t,\F^{\pit}_t)< M}\cdot\Delta(\A^{\pit}_t,\F^{\pit}_t) }. \label{eq:QNBad2a2}
\end{align}
The second term of the above equation can be bounded using the fact that $\Delta(\A^{\pit}_t,\F^{\pit}_t) < M$, as follows:
\begin{align}
    \eqref{eq:QNBad2a2} \leq \Ex{}{\sum_{t \in [T]} \event{\A^{\pit}_t\in \mathcal{S}_B(\F^{\pit}_t),\Delta(\A^{\pit}_t,\F^{\pit}_t) < M}\cdot\Delta(\A^{\pit}_t,\F^{\pit}_t) } \leq T\cdot M = T \sqrt{(48 \cdot r\cdot k\ln{T})/T} = \sqrt{48\cdot r\cdot k\cdot T\ln{T}}. \label{eq:QNBad2a2_res}
\end{align}
The first term can be bounded using \Cref{lemma:kappas2}, following the same procedure as in the distribution-dependent part:
\begin{align}
    \eqref{eq:QNBad2a1} &= \Ex{}{\sum_{t \in [T]} \event{\mathcal{Q}_t,\mathcal{N}_t,\A^{\pit}_t\in \mathcal{S}_B(\F^{\pit}_t),\Delta(\A^{\pit}_t,\F^{\pit}_t) \geq M}\cdot\Delta(\A^{\pit}_t,\F^{\pit}_t) }\nonumber\\
    &\leq \Ex{}{\sum_{t \in [T]}\sum_{i\in\A^{\pit}_t} \event{\mathcal{Q}_t,\mathcal{N}_t,\A^{\pit}_t\in \mathcal{S}_B(\F^{\pit}_t),\Delta(\A^{\pit}_t,\F^{\pit}_t) \geq M} \kappa_T(M,T_{i,t-1})} \nonumber\\
    &\leq \Ex{}{\sum_{i\in\A}\sum_{s =0}^{T_{i,T}} \kappa_T(M,s)} \nonumber\\
    &\leq 2k + \sum_{i\in\A} 48 \frac{r}{M}\ln{T} \nonumber\\
    &= 2k + \sqrt{48\cdot k\cdot r\cdot T\ln{T}} ,
    \label{eq:QNBad2b}
\end{align}
where the last inequality is derived similarly to \eqref{eq:QNBad2_res}.
We conclude our proof by combining the above \Cref{eq:oracle_fail_res,eq:samples_fail_res,eq:QNBad2a2_res,eq:QNBad2b} with \Cref{lemma:gamma}:
\begin{align*}
    \rho\Reg^{\pit}(T) &\leq \frac{1}{1 + \alpha\cdot \beta} \Ex{}{\sum_{t \in [T]} \Gamma^{\pit}_t} + \mathcal{O}(d_{\max}\cdot k) \\
    &\leq \frac{1}{1 + \alpha\cdot \beta}\Big( \frac{\pi^2 k}{3} \Delta_{\max} + \sqrt{48\cdot k\cdot r\cdot T\ln{T}} + 2k + \sqrt{48\cdot k\cdot r\cdot T\ln{T}} \Big) + \mathcal{O}(d_{\max}\cdot k)\\
    &\leq \frac{14\sqrt{k\cdot r\cdot T\ln{T}}}{1 + \alpha\cdot \beta} + \frac{k}{1 + \alpha\cdot \beta} (2+ \frac{\pi^2 }{3} \Delta_{\max})  + \mathcal{O}(d_{\max}\cdot k) .
\end{align*}
\end{proof}

\begin{lemma} \label{lemma:kappas2}
For any $t\in [T]$, if the event $\{\mathcal{Q}_t, \mathcal{N}_t, \A^{\pit}_t\in \mathcal{S}_B(\F^{\pit}_t), \Delta(\A^{\pit}_t,\F^{\pit}_t) \geq M\}$ holds, then $\Delta(\A^{\pit}_t,\F^{\pit}_t)\leq \sum_{i\in\A^{\pit}_t} \kappa_T(M,T_{i,t-1})$.
\end{lemma}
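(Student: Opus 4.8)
\textbf{Proof plan for \Cref{lemma:kappas2}.}
The plan is to reuse the proof of \Cref{lemma:kappas} almost verbatim, with the fixed positive number $M$ playing throughout the role that the arm-dependent quantity $\Delta_{\min}^i$ played there, and with the hypothesis $\Delta(\A^{\pit}_t,\F^{\pit}_t)\geq M$ inserted exactly where \Cref{lemma:kappas} derived the lower bound $\Delta(\A^{\pit}_t,\F^{\pit}_t)\geq\max_{i\in\A^{\pit}_t}\Delta_{\min}^i$ from the event $\A^{\pit}_t\in\mathcal{S}_B(\F^{\pit}_t)$.

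First I would invoke the event $\mathcal{N}_t$ to obtain, for every arm $i\in\A$, the two bounds $\mub_{i,t-1}\geq\mu_i$ and $|\mub_{i,t-1}-\mu_i|\leq\min\{2\sqrt{3\ln(t)/(2T_{i,t-1})},1\}$, exactly as in the proof of \Cref{lemma:kappas}. Then, on the event $\mathcal{Q}_t$, the oracle guarantee together with the facts that $\OPT_{\mub}(\F^{\pit}_t)$ is optimal for the inflated weights and that $\mub\geq\mu$ coordinatewise yields
\[
\mub(\A^{\pit}_t)\ \geq\ \alpha\cdot\mub(\OPT_{\mub}(\F^{\pit}_t))\ \geq\ \alpha\cdot\mu(\OPT_{\mu}(\F^{\pit}_t))\ =\ \mu(\A^{\pit}_t)+\Delta(\A^{\pit}_t,\F^{\pit}_t),
\]
and hence $\Delta(\A^{\pit}_t,\F^{\pit}_t)\leq\mub(\A^{\pit}_t)-\mu(\A^{\pit}_t)\leq\sum_{i\in\A^{\pit}_t}|\mub_{i,t-1}-\mu_i|$, word for word as in \Cref{lemma:kappas}.

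The single point where the argument diverges is the next step. Rather than lower-bounding $\Delta(\A^{\pit}_t,\F^{\pit}_t)$ by $\max_{i\in\A^{\pit}_t}\Delta_{\min}^i$, I simply use the assumed inequality $\Delta(\A^{\pit}_t,\F^{\pit}_t)\geq M$. This gives $\sum_{i\in\A^{\pit}_t}|\mub_{i,t-1}-\mu_i|-M\geq 0$, so, using $|\A^{\pit}_t|\leq r$,
\[
\Delta(\A^{\pit}_t,\F^{\pit}_t)\ \leq\ 2\sum_{i\in\A^{\pit}_t}|\mub_{i,t-1}-\mu_i|-M\ \leq\ 2\sum_{i\in\A^{\pit}_t}\Big(|\mub_{i,t-1}-\mu_i|-\tfrac{M}{2r}\Big).
\]
Substituting the $\mathcal{N}_t$-bound for $|\mub_{i,t-1}-\mu_i|$ and simplifying the constants turns the right-hand side into $\sum_{i\in\A^{\pit}_t}\big(\min\{\sqrt{24\ln(t)/T_{i,t-1}},2\}-M/r\big)$. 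Then I carry out the same two-case split as in \Cref{lemma:kappas}, now against the threshold $\ell_T(M)=24r^2\ln T/M^2$: when $T_{i,t-1}\leq\ell_T(M)$ the $i$-th summand is at most $\min\{\sqrt{24\ln(t)/T_{i,t-1}},2\}=\kappa_T(M,T_{i,t-1})$, and when $T_{i,t-1}>\ell_T(M)$ one checks (using $\ln t\leq\ln T$) that $\sqrt{24\ln(t)/T_{i,t-1}}\leq M/r$, so the $i$-th summand is nonpositive and $\kappa_T(M,T_{i,t-1})=0$. Summing over $i\in\A^{\pit}_t$ gives the claim.

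I do not anticipate a genuine obstacle here: the only things to be careful about are that $M>0$ is fixed (so that $\ell_T(M)$ and $\kappa_T(M,\cdot)$ are well defined and the case split is legitimate) and that the hypothesis $\Delta(\A^{\pit}_t,\F^{\pit}_t)\geq M$ is precisely what is needed to replace the set-dependent lower bound of \Cref{lemma:kappas}; everything else is a line-by-line transcription of that proof with $\Delta_{\min}^i$ replaced by $M$.
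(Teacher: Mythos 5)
Your proposal is correct and follows the paper's own proof essentially verbatim: both derive $\Delta(\A^{\pit}_t,\F^{\pit}_t)\leq\sum_{i\in\A^{\pit}_t}|\mub_i-\mu_i|$ from $\mathcal{Q}_t$, $\mathcal{N}_t$ and the overestimation property, substitute the hypothesis $\Delta(\A^{\pit}_t,\F^{\pit}_t)\geq M$ for the $\max_i\Delta_{\min}^i$ lower bound of \Cref{lemma:kappas}, and conclude with the same two-case comparison against $\ell_T(M)$. No gaps.
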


\begin{proof}
This proof resembles the proof of \Cref{lemma:kappas}: From \Cref{eq:sum_of_mus} and the fact that $\Delta(\A^{\pit}_t,\F^{\pit}_t) \geq M$ we obtain: $\Delta(\A^{\pit}_t,\F^{\pit}_t) \leq 2 \sum_{i\in \A^{\pit}_t} |\mub_i-\mu_i| - M$. This can be rewritten as:
\begin{align*}
    \Delta(\A^{\pit}_t,\F^{\pit}_t) &
    \leq 2 \sum_{i\in \A^{\pit}_t} |\mub_i-\mu_i| - M
    \leq 2 \sum_{i\in \A^{\pit}_t} (|\mub_i-\mu_i| - \frac{M}{2|\A^{\pit}_t|}) \leq 2 \sum_{i\in \A^{\pit}_t} \Big(|\mub_i-\mu_i| - \frac{M}{2r}\Big) ,
\end{align*}
where in the last inequality we use that $|\A^{\pit}_t|\leq r$, by definition of $r$. Then, using \Cref{eq:nice_sampling}, the expression above becomes:
\begin{align*}
    \Delta(\A^{\pit}_t,\F^{\pit}_t) 
    &\leq 2 \sum_{i\in \A^{\pit}_t} \Big(\min\{2\sqrt{\frac{3\ln{t}}{2T_{i,t-1}}},1\} - \frac{M}{2r}\Big) 
    = \sum_{i\in \A^{\pit}_t} \Big(\min\{\sqrt{\frac{24\ln{t}}{T_{i,t-1}}},2\} - \frac{M}{r}\Big) .
\end{align*}
Finally, similarly to \Cref{lemma:kappas}, we note that when $T_{i,t-1} \leq \ell_{T}(M)$, then $\min\{\sqrt{\frac{24\ln{t}}{T_{i,t-1}}},2\} - \frac{M}{r} \leq \kappa_T(M,T_{i,t-1}) $. However, when $T_{i,t-1} > \ell_{T}(M)$ then $\min\{\sqrt{\frac{24\ln{t}}{T_{i,t-1}}},2\} \leq \sqrt{\frac{24\ln{t}}{T_{i,t-1}}} \leq \frac{M}{r}$, thus $\min\{\sqrt{\frac{24\ln{t}}{T_{i,t-1}}},2\} - \frac{M}{r}\leq 0 = \kappa_T(M,T_{i,t-1})$, which completes the proof.

\end{proof}

\end{document}